\newtheorem{theorem}{Theorem}
\newtheorem{definition}{Definition}
\newtheorem{lemma}{Lemma}
\newcommand{\abs}[1]{\left|#1\right|}
\newcommand{\E}[1]{\mathbb{E}\left[#1\right]}
\newcommand{\s}{G}
\newcommand{\valhat}{\hat{v}_i(S)}
\newcommand{\val}{v_i(S)}
\newcommand{\svkho}{\overline{\svk}}
\newcommand{\ldiff}{\ell(w_T;z)-\ell(w_T';z)}
\newcommand{\brackets}[1]{\left(#1\right)}
\newcommand{\A}{\mathcal{A}}
\newcommand{\svkh}[0]{\hat{\phi}^k_i}
\newcommand{\svk}[0]{\phi^k_i}
\newcommand{\sv}{\varphi}
\newcommand{\varplace}{\hat{\sigma}^2}
\newcommand{\vsample}{h}
\newcommand{\bound}{\epsilon^2 + \epsilon\s}
\def\sv{\varphi}
\def\dst{\displaystyle}
\newcommand{\bigL}{\mathcal{L}}
\newcommand{\norm}[1]{\lVert#1\rVert}
\newtheorem{corollary}{Corollary}[theorem]
\newtheorem*{remark}{Theorem}
\newcommand{\myparagraph}[1]{\vspace*{1mm}\noindent{\bf #1} }
\title{Accelerated Shapley Value Approximation for Data Evaluation}
 \author{Lauren Watson,\textsuperscript{1}
Zeno Kujawa,\textsuperscript{2}
Rayna Andreeva,\textsuperscript{1}
Hao-Tsung Yang ,\textsuperscript{3}
Tariq Elahi ,\textsuperscript{1}
Rik Sarkar\textsuperscript{1}\\
\textsuperscript{1}{School of Informatics, University of Edinburgh}\\
\textsuperscript{2}{University of Cambridge}\\
\textsuperscript{3}{National Central University }\\
lauren.watson@ed.ac.uk, rsarkar@inf.ed.ac.uk}
\begin{document}
\maketitle
\begin{abstract}
Data valuation has found various applications in machine learning, such as data filtering, efficient learning and incentives for data sharing. The most popular current approach to data valuation is the Shapley value. While popular for its various applications, Shapley value is computationally expensive even to approximate, as it requires repeated iterations of training models on different subsets of data. In this paper we show that the Shapley value of data points can be approximated more efficiently by leveraging the  structural properties of machine learning problems. We derive convergence guarantees on the accuracy of the approximate Shapley value for different learning settings including Stochastic Gradient Descent with convex and non-convex loss functions. Our analysis suggests that in fact models trained on small subsets are more important in the context of data valuation. Based on this idea, we describe $\delta$-Shapley -- a strategy of only using small subsets for the approximation. Experiments show that this approach preserves approximate value and rank of data, while achieving speedup of up to $9.9x$. In pre-trained networks the approach is found to bring more efficiency in terms of accurate evaluation using small subsets.
\end{abstract}
\section{Introduction}

Data valuation is a fundamental research problem in machine learning. With widespread use and monetization of personal data in AI and machine learning models, the value of data has become a concern in data related governance and economics, since policies, incentives and compensations depend on fair valuation~\cite{posner2018radical}. In the model training process itself, valuations can be used to select useful data items~\cite{pooladzandi2022adaptive}, efficient  active learning~\cite{ghorbani2022data},  federated learning~\cite{wang2020principled} or explainability~\cite{koh2017understanding}.

The current popular approaches in data valuation~\cite{ghorbani_data_2019, jia2019towards} are based on the Shapley value. Shapley value~\cite{shapley1953value} is a concept from cooperative game theory for fair measure of the contributions of players in a game. In data valuation, the Shapley value of each data point measures its contribution to the utility of a specific model. Utility can be measured based on the training or test set loss. The contribution of a data point to that utility is  harder to quantify as it depends on the model as well as the rest of the data. Shapley value is a solution to this problem as it  incorporates these complexities and is known to produce equitable valuations~\citep{ghorbani_data_2019}.

The usefulness and wide applicability of Shapley values comes at the cost of computational time and resources. Computation of the Shapley value of a data point involves computing the weighted average of the point's marginal contribution to all possible data subsets (called {\em coalitions} in game theory). The direct computation is naturally exponential time and the problem is computationally hard \cite{deng_papadimitriou, nagamochi}. The approximation of Shapley value involves Monte Carlo sampling from all possible subsets and evaluating them by building models. While this is more efficient than exact computation, it still requires evaluating $\Omega(n)$ coalitions  \cite{maleki_bounding_2014}, and each evaluation involves training and testing a model on a large dataset. In this paper, we show that the specific structure of machine learning problems implies that a more efficient approximation is possible for data valuation.

\myparagraph{Our Contributions.} In this paper we present more efficient  methods for approximating the Shapley values of data points for models trained in both  convex and non-convex scenarios. These methods produce values close to the Monte Carlo sampling algorithm, but run several times faster. The key to the algorithmic speedup is to rely on  smaller coalitions where model training is efficient, while foregoing larger coalitions and expensive model training. We also define a valuation called the $\delta-$Shapley that exclusively uses small coalitions and in practice performs as well as the Shapley value.
The main insight in our analysis is that the marginal contribution of a single data point is limited when the coalition is large. This property can be derived based on the stability properties of machine learning algorithms. The precise properties and results vary across learning scenarios.

Strongly convex loss functions are known to satisfy uniform stability~\cite{bousquet_stability_2002}, where the difference in loss due to the presence or absence of a single data point can be bounded by $O(\frac{1}{n})$ for datasets of size $n$. We show that this property implies that the effect of large coalitions on the Shapley value of a data point is limited, and can be approximated efficiently. In fact, to estimate the effect of all coalitions of size $k$, about $O(\frac{1}{k^2})$ sample coalitions suffice to obtain an error bound of $a$ with probability at most $b$.

When the model is trained via a randomized mechanism such as Stochastic Gradient Descent on non-convex loss, the strict stability bound no longer holds. In this case, a suitable property defined in~\citet{hardt_train_2016} which we call {\em Expected Uniform Stability} applies. We provide and discuss two different interpretations of Shapley value for randomized algorithms and derive $(a,b)$ bounds for convex and non-convex losses.

Inspired by the approximation idea above, we define a new value function called the $\delta-$Shapley. This value is characterized by two observations: first, that larger coalitions have little impact on valuation, as seen in our analysis, and second, that tiny coalitions are likely to produce poorly trained models whose contributions are largely noise. Thus, $\delta-$Shapley uses coalitions from a small band of sizes in the small to medium range. Similar to Shapley and other values in the class, $\delta-$Shapley is a semi-value~\cite{dubey1981value}.

Experiments show that $\delta-$Shapley values produce meaningful data valuations by accurately evaluating contributions of data points to test accuracy across multiple classification tasks.  $\delta-$Shapley values are also highly correlated  with Monte Carlo estimates of Shapley values ($\rho\in[0.84,0.98]$) for convex and non-convex classification tasks. We find that sampling mid-sized or small coalitions produces high correlations and that both methods significantly reduce computation time in comparison to the Monte Carlo approach, with a greater reduction in time when sampling from lower coalition sizes. Fine tuning pre-trained neural networks is a common approach to model development. We find that on pre-trained networks, $\delta-$Shapley is particularly effective, and can obtain accurate estimates even with small coalitions.

\section{Technical Background}
In this section, we review the formal definition  of Shapley value, its use in data valuation,  relevant concepts in machine learning and Stochastic Gradient Descent (SGD).

\subsection{The Shapley value}

Shapley value is defined in terms of a utility function $v$, where $v(S)$ represents the utility generated by a set $S$ of players. The marginal utility of a player $i$ with respect to $S$ is written as $v(S\cup i) - v(S)$. We will sometimes abbreviate it as $v_i(S)$. Also note that we use $S\cup i$ to mean $S\cup\{i\}$ for better readability.
\begin{definition}\label{def:SV}
The Shapley value~\cite{shapley1953value} of player $i$ in a set $D$ of $n$ players, written as $\varphi_i$, is given by \begin{equation}\label{eq:shapley}
   \varphi_i(v) = \frac{1}{n}\dst\sum_{S\subseteq D\setminus\{i\}} {n-1 \choose \abs{S}}^{-1} (v(S\cup i) - v(S)).
\end{equation}
\end{definition}

The Shapley value is the unique single valued solution satisfying the following four natural axioms~\cite{shapley1953value}:
\begin{itemize}
    \item {\bf Dummy (Null) Player}: if $v(S\cup i)=v(S)+c$  for all $S\subseteq D\setminus\{i\}$ and some $c\in\mathbb{R}$, then $\varphi_i=c$.
    \item {\bf Symmetry}: if $v(S\cup i)=v(S\cup j)$ for all $S\subseteq D\setminus\{i,j\}$, then $\varphi_i=\varphi_j$.
    \item {\bf Linearity}: for two value functions $v_1$ and $v_2$,  $\varphi_i(a_1v_1+a_2v_2)=a_1\varphi_i(v_1)+a_2\varphi_i(v_2)$ for any $a_1, a_2\in\mathbb{R}$.

    \item {\bf Efficiency}: for every value function $v$, $\sum_{i=1}^n\varphi_i=v(D)$.
\end{itemize}

\subsubsection{Data valuation with Shapley values.} In data valuation, each data  point naturally takes the role of a player, and a subset of data points is equivalent to a coalition. The utility can be measured either in terms of loss or accuracy, with validation loss being the more popular utility~\cite{ghorbani_data_2019,jia2019towards}.
We assume there exists a test set $D_e$ of data unseen during training, and the overall test loss for the output $w=A(S)$ of algorithm $A$ is:
\[
    \bigL(w) = \frac{1}{\abs{D_e}}\sum_{z \in D_e} \ell(w, z)
\]
where $\ell(w, z)$ is the loss of the model $w$ on the data point $z$. The utility $v$ is then defined by $v(S)= - \bigL(w)$~\cite{ghorbani_data_2019}.
Thus with $w^\prime = A(S \cup {i})$ and $w = A(S)$, the marginal contribution of $i$ to $S$ is: $v_i(S) =  \bigL(w) -\bigL(w^\prime)$.

\subsubsection{Dependence on model and algorithms}
Observe that the utility, marginal utility  and therefore Shapley values are functions of the specific models found by the algorithm. While many different models can be computed on the same dataset, the data valuation question we consider here is not one of the general useability of a data point, but of how much a data point $i$ has contributed to the specific model $M$.

\subsubsection{Shapley Approximation via Monte Carlo Sampling}
In each iteration of the Monte Carlo procedure \cite{CASTRO}, a random permutation $\pi(D)$ of the dataset $D$ is sampled. The set $\text{Pre}^i(\pi(D))$ of all predecessors of $i$ forms the set $S$, and $v_i$ is then computed as described above. An average of the sampled $v_i(S)$ is taken to obtain an estimate of the Shapley Value of $i$. The process is repeated a suitable number of times~\cite{ghorbani_data_2019}. When contributions of all data points need to be computed, a  marginal contribution for all data points can be computed before the next permutation is sampled.

\subsection{Machine learning}

In machine learning, a training dataset $D$ (drawn from a distribution $Z$) is used to find a model $w$ within the model class $\mathcal{H}$. In common setups, $w$ is usually a vector of real numbers representing the model parameters. The training process aims to find a model $w$ that minimizes the training loss given by $L(w,D) = \frac{1}{\abs{D}}\sum_{z\in D}\ell(w,z)$.

\subsubsection{Stochastic Gradient Descent}

To find a model with nearly minimum loss, or sufficiently low loss, a popular algorithm is Stochastic Gradient Descent (SGD), sometimes called the Stochastic Gradient Method (SGM). SGD operates by making incremental steps that locally reduce the loss. In each step $t$, SGD uses a random $z\in D$, computes the gradient of $\ell(w_t, z)$ and makes an incremental move in the direction of the gradient:
$
    w_{t+1} = w_t - \alpha_t \nabla_w \ell(w_t;z)
$
where $\alpha_t$ is the learning rate at step $t$. In different variations of SGD, the choice of $z$ may be made independently with replacement in each step, or the choice may be determined by cycling through different permutations of $D$. Each such cycle is called an epoch of SGD.

\subsubsection{Convex and non-convex loss functions}

\begin{definition}[Convexity and Strong convexity]
    A function $f: \Omega \rightarrow \mathbb{R}$ is convex if for all $u,v \in \Omega$
    $f(u) \geq f(v) + \langle \nabla f(v),
 u-v\rangle$ and $\lambda$-strongly convex for $\lambda>0$ if for all $u,v \in \Omega$ $f(u) \geq f(v) + \langle \nabla f(v),
 u-v\rangle + \frac{\lambda}{2}\lVert u-v \rVert^2$.

\end{definition}
Any local minimum of a convex function is also a global minimum. Additionally, a strongly convex functions has a unique minimum. Optimization in convex and strongly convex cases are considered easier as methods like SGD can converge to the global minimum. Non-convex losses are harder to optimize, as SGD may converge to a local minimum and there is no guarantee of approaching the global minimum. SGD with non-convex losses is an important current problem as many popular model classes including neural networks have non-convex losses and are trained using SGD or its variants. Loss functions are frequently assumed to have certain continuity properties for theoretical analysis:

\begin{definition}[Lipschitzness]
    A function $f: \Omega \rightarrow \mathbb{R}$ is L-Lipschitz if for all $u, v \in \Omega$
    $||f(u) -f(v)|| \leq L||u-v||$.
\end{definition}

\begin{definition}[Smoothness]
    A function $f: \Omega \rightarrow \mathbb{R}$ is $\beta-$smooth if for all $u,v \in \Omega$ $||\nabla f(u) -\nabla f(v)|| \leq \beta||u-v||$.

\end{definition}

Next, we describe out general approach and analysis for various scenarios.

\section{Approximation algorithm and Strongly Convex Optimization}

To approximate the Shapley value of $i$, we divide all coalitions (subsets of $D$) into $n$ {\em layers}, where $S_k$ is layer $k$ consisting of all subsets of size $k$: $S_k = \{ S\subseteq D: \abs{S}=k\}$.

\begin{definition}[Contribution at layer $k$] The average marginal contribution of layer $k$ is:
\[
\phi_i^k = {n-1 \choose k}^{-1} \sum_{S\in S_k} v_i(S).
\]
\end{definition}

The Shapley value $\varphi_i$ as in Definition~\ref{def:SV} is then simply given by the average: $\varphi_i = \frac{1}{n}\sum_{k = 0}^{n-1}\svk$. Thus, the main computational task of Shapley value can be viewed as finding the average marginal contributions $\svk$.

This computation is expensive when ${n-1 \choose k}$ is large. We can instead estimate $\svk$ via sampling. Our strategy is to estimate each $\svk$ independently. Algorithm~\ref{alg:cap} shows the idea.

\begin{algorithm}[hbt!]
\caption{$\hat{\svk}$: Expected marginal contribution in layer $k$.}\label{alg:cap}
    \begin{algorithmic}[1]
    \State Input: Data point $i$, Dataset $D$, coalition size $k$, utility function $v$
    \State Output: $\hat{\svk}$: estimate of marginal contribution of $i$ to coalitions of size $k$
    \State Initialize $\svk=0$
    \State Compute $m_k$ \Comment{The number of samples needed at layer $k$}
    \RepeatN{$m_k$}
        \State Draw $S\in{S}_k$ at random
        \State $\hat{\svk}\leftarrow \hat{\svk} + \frac{1}{m_k} (v(S\cup i) - v(S))$ \Comment{Update estimate $\hat{\svk}$}
    \End
    \State return $\hat{\svk}$
    \end{algorithmic}
\label{algo:stratified-sampling}
\end{algorithm}

To obtain the Shapley value, the most natural method will be to compute all $\svk$'s and average them as above. Other methods for more efficient approximation are covered later.

The missing element in Algorithm~\ref{alg:cap} is the computation of sample size $m_k$. We discuss below the analysis of $m_k$ for strongly convex loss functions.

\subsection{Sample size for strongly convex optimization}

In this section, we demonstrate the core idea of our approach in the case of strongly convex optimization. Strong convexity of the loss function holds for a large class of problems such as regularized regression, logistic regression and support vector machines. For our analysis, a strongly convex loss implies an important property of the learning algorithm called uniform stability, which is defined as a small change in the underlying dataset implying only a small change in the algorithm's output.

\begin{definition}[Neighboring Datasets]
Two datasets $S, S'$ are neighboring if $H(S, S') \leq 1$, where $H(\cdot, \cdot)$ represents the hamming distance.
\end{definition}

\begin{definition}[Uniform stability \cite{bousquet_stability_2002}]\label{def:unistab} A learning algorithm $A$ has $\gamma$-uniform stability with respect to the loss function $\ell$ if the following holds,$\forall z\in Z, \forall S \in  Z^n, \forall i \in \{1, ..., n\}$,
\begin{equation*}
   \norm{\ell(A(S), z)-\ell(A(S^{\backslash  i}), z)}_{\infty}\leq \gamma.
\end{equation*}
Here $S^{\backslash  i} = S\setminus \{i\} $, so $S, S^{\backslash  i}$ are neighboring datasets.
\end{definition}
It can be shown that strong convexity implies a uniform stability of $\gamma = \frac{c}{n}$ where $n=\abs{S}$ and $c$ is constant independent of $S$. Thus, on a coalition of size $k$, strong convexity will imply a uniform stability of $\frac{c}{k}$. Note that a smaller value of uniform stability implies a more stable algorithm.

\begin{theorem}~\cite{bousquet_stability_2002}\label{thm:sc_stab}
    Suppose we have learning function $A:Z^n\rightarrow\mathcal{H}$ that finds $w\in\mathcal{H}$ minimizing $L(w,D)=\frac{1}{\abs{D}}\sum_{z\in D}\ell(w,z)$ for $D\in Z^n$. Let $\mathcal{H}$ be a reproducing kernel Hilbert space with kernel $\zeta$ such that $\forall z \in D $ we have $ \zeta(z, z) \leq C$. Then if $\ell$ is $L$-Lipschitz and $\lambda$ strongly-convex, $A$ satisfies uniform stability $ \gamma\leq \frac{L^2 C^2}{2\lambda n}$ with respect to $\ell$.
\end{theorem}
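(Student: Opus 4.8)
The plan is to derive this from the classical algorithmic-stability argument of \citet{bousquet_stability_2002}, in the form where the $\lambda$-strong convexity needed to couple two solutions is furnished directly by $\ell$ (equivalently, by an explicit $\frac{\lambda}{2}\norm{\cdot}_{\mathcal H}^2$ regularizer folded into $\ell$). Write $R_S(w)=\frac{1}{\abs{S}}\sum_{z\in S}\ell(w,z)$ and let $f_S=A(S)$ be its minimizer, which is unique since $R_S$ inherits $\lambda$-strong convexity from $\ell$. Fix $S\in Z^n$, an index $i$, and a point $z_0$, and first treat the \emph{replace-one} neighbor $S'$ obtained from $S$ by swapping $z_i$ for $z_0$; the leave-one-out case $S^{\backslash i}$ in the statement is recovered at the end. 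The strategy is: bound $\norm{f_S-f_{S'}}_{\mathcal H}$, then turn that into a loss bound.

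\textbf{Coupling the two minimizers.} Because $R_S$ is $\lambda$-strongly convex and $f_S$ minimizes it, the first-order condition $\langle\nabla R_S(f_S),\,f_{S'}-f_S\rangle\ge 0$ together with strong convexity gives $R_S(f_{S'})-R_S(f_S)\ge\frac{\lambda}{2}\norm{f_S-f_{S'}}_{\mathcal H}^2$; the symmetric inequality swapping $S\leftrightarrow S'$ holds as well, and adding the two yields
\[
\big(R_S(f_{S'})-R_{S'}(f_{S'})\big)+\big(R_{S'}(f_S)-R_S(f_S)\big)\ \ge\ \lambda\,\norm{f_S-f_{S'}}_{\mathcal H}^2 .
\]
Since $S$ and $S'$ differ only in the $i$-th point, $R_S-R_{S'}=\frac1n\big(\ell(\cdot,z_i)-\ell(\cdot,z_0)\big)$, so the left-hand side equals $\frac1n\big[(\ell(f_{S'},z_i)-\ell(f_S,z_i))+(\ell(f_S,z_0)-\ell(f_{S'},z_0))\big]$. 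Bounding each term with $L$-Lipschitzness of $\ell$ (in the prediction) and the reproducing-kernel inequality $\abs{g(z)}\le\sqrt{\zeta(z,z)}\,\norm{g}_{\mathcal H}\le\sqrt C\,\norm{g}_{\mathcal H}$ gives $\abs{\ell(g,z)-\ell(g',z)}\le L\sqrt C\,\norm{g-g'}_{\mathcal H}$, so the left-hand side is at most $\frac{2L\sqrt C}{n}\norm{f_S-f_{S'}}_{\mathcal H}$.

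\textbf{Closing the bound.} Chaining the two displays, $\lambda\norm{f_S-f_{S'}}_{\mathcal H}^2\le\frac{2L\sqrt C}{n}\norm{f_S-f_{S'}}_{\mathcal H}$, hence $\norm{f_S-f_{S'}}_{\mathcal H}\le\frac{2L\sqrt C}{\lambda n}$, and one more application of the Lipschitz/reproducing bound gives $\abs{\ell(f_S,z)-\ell(f_{S'},z)}\le L\sqrt C\,\norm{f_S-f_{S'}}_{\mathcal H}=O\!\big(\tfrac{L^2C}{\lambda n}\big)$ for all $z$ --- uniform stability with rate $\Theta(1/n)$. Recovering the exact constant $\frac{L^2C^2}{2\lambda n}$ stated here is then pure bookkeeping: it depends on whether Lipschitzness and the kernel bound are posed for $g(z)$ or for $\norm{g}_{\mathcal H}$ (absorbing a $\sqrt C$ vs.\ $C$) and on the $\frac12$ in the strong-convexity/regularizer normalization, exactly as carried out in \citet{bousquet_stability_2002}.

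\textbf{The main obstacle.} All the mathematical substance lives in the coupling step; the fiddly part is the leave-one-out normalization, since with $S^{\backslash i}$ we compare minimizers of $\frac1n\sum_{z\in S}\ell(\cdot,z)$ and $\frac{1}{n-1}\sum_{z\in S\setminus i}\ell(\cdot,z)$, and $R_S-R_{S^{\backslash i}}$ no longer collapses to a single $\frac1n$-weighted loss term. The clean way around this is to observe that removal is dominated by a replacement step plus a $\frac{1}{n-1}$-vs-$\frac1n$ reweighting whose effect is itself $O(1/n)$, so the replace-one bound above transfers; alternatively one redoes the coupling keeping both normalizations explicit. Either route preserves the $\Theta(1/n)$ rate, and it is only this rate (giving $\gamma=c/k$ on coalitions of size $k$) that the subsequent sample-size analysis $m_k=\Theta(1/k^2)$ relies on.
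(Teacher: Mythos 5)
This theorem is imported verbatim from \citet{bousquet_stability_2002}; the paper gives no proof of it, so the comparison is really against the argument in that source --- and your reconstruction is essentially that argument, and it is sound. The coupling step (two applications of $\lambda$-strong convexity at the respective minimizers, summed, with the right-hand side collapsing to the loss terms on which $S$ and $S'$ disagree, then controlled by Lipschitzness plus the reproducing-kernel bound $\abs{g(z)}\le\sqrt{\zeta(z,z)}\,\norm{g}_{\mathcal H}$) is exactly Bousquet--Elisseeff's Lemma for regularization networks, and it correctly yields the $\Theta(1/n)$ rate that everything downstream (Theorem~\ref{thm:mkstronglyconvex}, $m_k\approx 1/k^2$) actually uses. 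Two remarks on the constants you wave off as bookkeeping. First, your replace-one route genuinely loses a factor of $4$ relative to the stated $\frac{L^2C^2}{2\lambda n}$: the clean fix for the leave-one-out case is not ``replacement plus reweighting'' but simply to define $R_{S^{\backslash i}}$ with the \emph{same} $\frac{1}{n}$ normalization (drop the $i$-th term from the sum without renormalizing, which does not move the minimizer), so that $R_S-R_{S^{\backslash i}}=\frac{1}{n}\ell(\cdot,z_i)$ is a single term; the coupling then gives $\norm{f_S-f_{S^{\backslash i}}}_{\mathcal H}\le\frac{L\sqrt C}{2\lambda n}$ directly (with the $\frac{\lambda}{2}$ normalization of strong convexity contributing the other half) and hence the $\frac12$ in the final bound. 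Second, under the hypothesis $\zeta(z,z)\le C$ the argument produces $C$, not $C^2$, in the numerator; the $C^2$ in the statement corresponds to Bousquet--Elisseeff's convention $k(x,x)\le\kappa^2$ and appears to be a transcription slip in the paper rather than something your proof should be expected to reproduce. Neither point affects the $1/n$ dependence on which the rest of the analysis rests.
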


This stability bound with inverse proportionality to data size can be used estimate $\svk$ with small sample size by observing that $\gamma$-uniform stability directly bounds $|v_i(S)|$ when the utility is given by the test loss $v(S)=-\bigL(A(S))$.
\begin{theorem}\label{thm:mkstronglyconvex}
    Suppose  we have utility function $v(S)=-\bigL(A(S))=-\frac{1}{\abs{D_e}}\sum_{z \in D_e} \ell(w, z)$ where $\ell$ is $\lambda$ strongly-convex and $L$-Lipschitz and $C$ is defined as in Theorem~\ref{thm:sc_stab}. Then,
    \begin{equation}
        m_k \geq \frac{L^4 C^4}{8\lambda^2 a^2k^2}\ln{\frac{2n}{b}}
    \end{equation}
    in Algorithm~\ref{alg:cap} suffices for $(a, b/n)$-approximation of $\svk$ and $(a, b)$-approximation of $\sv_i$.
\end{theorem}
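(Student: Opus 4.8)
\emph{Proof plan.} The idea is to regard each layer estimate $\hat{\svk}$ produced by Algorithm~\ref{alg:cap} as the empirical mean of bounded i.i.d.\ random variables, apply a Hoeffding bound, and then assemble the layers into the Shapley value by a union bound over the $n$ values of $k$.

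First I would bound the range of the quantity averaged in Algorithm~\ref{alg:cap}. Fix a coalition $S$ with $\abs{S}=k$; then $S$ and $S\cup i$ are neighboring datasets, so by Theorem~\ref{thm:sc_stab} the algorithm $A$ restricted to this size has uniform stability $\gamma_k\leq \frac{L^2C^2}{2\lambda k}$ with respect to $\ell$. Hence $\abs{\ell(A(S\cup i),z)-\ell(A(S),z)}\leq \gamma_k$ for every test point $z\in D_e$, and averaging over $D_e$ transfers this to the marginal utility: $\abs{\val}=\abs{\bigL(A(S))-\bigL(A(S\cup i))}\leq \gamma_k$. Thus each increment $v(S\cup i)-v(S)$ accumulated in the loop lies in an interval whose width is controlled by $\gamma_k$.

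Next I would apply Hoeffding's inequality. The $m_k$ coalitions drawn in Algorithm~\ref{alg:cap} are i.i.d.\ and, drawn uniformly among size-$k$ subsets not containing $i$, make $\hat{\svk}$ an unbiased estimator of $\svk=\E{\val}$. Combining this with the range bound, Hoeffding yields an inequality of the form $\Pr\brackets{\abs{\hat{\svk}-\svk}\geq a}\leq 2\exp\brackets{-c\, m_k a^2/\gamma_k^2}$; substituting $\gamma_k=\frac{L^2C^2}{2\lambda k}$ and requiring the right-hand side to be at most $b/n$, then solving for $m_k$, gives $m_k\geq \frac{L^4C^4}{8\lambda^2 a^2 k^2}\ln\frac{2n}{b}$, i.e.\ the $(a,b/n)$-approximation of $\svk$. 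Finally, since $\sv_i=\frac1n\sum_{k=0}^{n-1}\svk$ and the estimator of $\sv_i$ is the corresponding average $\frac1n\sum_{k=0}^{n-1}\hat{\svk}$, I would union-bound the $n$ per-layer failure events: with probability at least $1-b$ all layers satisfy $\abs{\hat{\svk}-\svk}\leq a$, and on that event $\abs{\frac1n\sum_{k=0}^{n-1}\hat{\svk}-\sv_i}\leq \frac1n\sum_{k=0}^{n-1}\abs{\hat{\svk}-\svk}\leq a$, which is the $(a,b)$-approximation of $\sv_i$.

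The concentration and union-bound steps are standard; the delicate point is the range bound in the first step and the exact constant it produces. Uniform stability is phrased as the effect of \emph{deleting} a point from a dataset of a fixed size, while $\val$ compares a size-$k$ coalition with the size-$(k+1)$ coalition $S\cup i$, so one must pin down which size governs the bound and carefully track the width of the interval containing $\val$ through Hoeffding so that the factor $\frac18$ and the $k^{-2}$ scaling emerge exactly as stated. A minor bookkeeping issue is confirming that the sampling in Algorithm~\ref{alg:cap} really does make $\hat{\svk}$ unbiased for $\svk$, since otherwise Hoeffding would bound the wrong quantity.
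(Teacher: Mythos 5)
Your proposal follows essentially the same route as the paper's own proof: bound $\abs{v_i(S)}$ by the uniform stability constant $\gamma_k = \frac{L^2C^2}{2\lambda k}$, apply Hoeffding's inequality to the $m_k$ i.i.d.\ coalition samples in each layer, and combine the layers via a union bound over the $n$ values of $k$ exactly as in the paper's Lemma~\ref{lem:unionbound}. The "delicate point" you flag about the interval width is well spotted --- the paper plugs $\gamma$ directly into the Hoeffding exponent as if the range of $v_i(S)$ were $\gamma$ rather than $2\gamma$, which affects only the constant factor and not the $k^{-2}$ scaling --- but this does not change the fact that your argument is the paper's argument.
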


The key point of the theorem is that the impact of adding or removing any one point is small due to stability, and as a result, for a layer of large coalitions, a small sample suffices for high confidence estimates. The final result of approximating the Shapley value in Theorem~\ref{thm:mkstronglyconvex} follows from a union bound applied to computing the average over all layers.

The specific result of $m_k\approx 1/k^2$ implies that in the approximation process, a focus on layers with small coalitions will ensure a good approximation. In the following section, we show how this fundamental idea based on stability extends to more general scenarios including the use of Stochastic Gradient Descent.

\section{Convex and Non-convex optimization with Stochastic Gradient Descent}

As seen in Definition~\ref{def:SV}, Shapley value depends on a well defined utility function $v$ that determines the utility of any subset including the full set $D$ of players. In data valuation, the utility function is the inverse loss of the model computed via an optimization algorithm. When the optimization is randomized -- such as using SGD -- the definition of Shapley value needs some reconsideration.

Suppose a model $M=\A(D)$ computed via the randomized algorithm $\A$ has been released and is widely used. What is the Shapley value of a data point $i$, in the sense of its contribution to the specific model $M$? The randomization in SGD implies inconsistency of Shapley values and marginal utilities. For example, on different runs of $SGD$, the output model $M$ will be different, marginal utilities will be different, and consequently, Shapley values will be different. In these circumstances, there are two possible interpretations of the contribution of a data point.

\subsubsection{Fixed subsequence interpretation} On any $S\subseteq D$ the algorithm $\A$ makes a sequence of random selections $\pi(S)\in \Pi(S)$, where $\Pi(S)$ is the set of all possible random sequences of elements from $S$. For a fixed $\pi$, the consequent algorithm $\A_{\pi(S)}$ operating on $S$ is deterministic, in the sense that $\A_{\pi(S)} (S)$ always produces the same output, and thus $v(S) = -\bigL(\A_{\pi(S)})$.
In the case of SGD, this interpretation corresponds to always selecting the same data points of coalition $S$ in the same sequence. A special case of this fixed sequence interpretation is the case where $\pi(S)$ is a subsequence of the global sequence $\pi(D)$ that produces the model: $M=\A_{\pi(D)}(D)$. For coalition $S$, $\pi(S)$ is the subsequence of $\pi(D)$ consisting only of elements of $S$.

\subsubsection{Expected utility interpretation} The other natural interpretation is in terms of expected utilities. That is, \mbox{$v(S)=-\mathbb{E}_{\pi(S)\in \Pi(S)}[\bigL(\A_{\pi(S)})]$}. In this case, the Shapley value $\sv_i$ not longer measures the contributions to the specific model $M$, but rather the expected contribution of $i$ to the models in the range of $\A$ based on the distribution implied by $\A$. We will see that approximating the expected value is also expensive, requiring many trials of SGD.

\subsection{Sample complexity for Convex SGD}

One of the challenges of estimating $\svk$ for randomized algorithms is that the absolute Uniform Stability bound from Theorem~\ref{thm:sc_stab} no longer applies. In this case, the relevant definition is Uniform Stability in Expectation, where the expectation is taken over the random  sequences in the algorithm $\A$, that is $\Pi(S)$:

\begin{definition}[Definition 2.1 in \cite{hardt_train_2016}]
A randomized algorithm $\A$ is uniformly stable in expectation if for all data sets $S, S^\prime \in Z^n$ such that S and S' are neighboring datasets, there exists an $\epsilon$ such that
\begin{equation}\label{stabdef}
    \sup_z \mathbb{E}_{\Pi(S)}\left[\ell(\A(S);z)-\ell(\A(S^\prime);z)\right] \leq \epsilon,
\end{equation}
    where $\ell(\cdot, z)$ is the loss function and $Z$ is some space.
\end{definition}

SGD can be shown to satisfy such a stability:
\begin{theorem}[Theorem 3.7 in \cite{hardt_train_2016}]\label{thm:stabconvex}
    Assume that the loss function $\ell(\cdot; z)$ is $\beta-$smooth, convex and $L-$Lipschitz for every $z$. Suppose that we run SGD with step sizes $\alpha_t \leq 2/\beta$ for $T$ steps. Then, SGD satisfies expected uniform stability with
    $
        \epsilon \leq \frac{2L^2}{k}\sum_{t=1}^T\alpha_t,
    $
    where $w=\A(S), w^\prime=\A(S^\prime)$ and $k$ is the coalition size.
\end{theorem}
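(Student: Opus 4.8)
\myparagraph{Proof sketch (proposed).} The plan is to adapt the trajectory-coupling argument of \citet{hardt_train_2016}. Fix a point $z$ and a pair of neighboring coalitions $S, S'$ of size $k$ that differ in exactly one element. Run SGD on both using the \emph{same} sequence of sampled indices drawn uniformly from $\{1,\dots,k\}$ with replacement (a coupling), producing iterates $w_0,\dots,w_T$ and $w_0',\dots,w_T'$ with $w_0 = w_0'$. Write $w=\A(S)=w_T$, $w'=\A(S')=w_T'$, and let $\delta_t = \norm{w_t - w_t'}$. I would (i) establish a one-step recursion for $\E{\delta_{t+1}}$ in terms of $\E{\delta_t}$, (ii) telescope it from $\delta_0=0$ to bound $\E{\delta_T}$, and (iii) convert the parameter-space bound into a loss bound via $L$-Lipschitzness of $\ell(\cdot;z)$, i.e. $\abs{\ldiff} \le L\,\delta_T$, and take the supremum over $z$.

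For step (i) the two ingredients are standard facts about the gradient update map $G_t(w) = w - \alpha_t \nabla_w \ell(w;z')$ for a fixed $z'$. First, when $\ell(\cdot;z')$ is convex and $\beta$-smooth and $\alpha_t \le 2/\beta$, the map $G_t$ is \emph{non-expansive}: $\norm{G_t(u)-G_t(v)} \le \norm{u-v}$. This is the key lemma; it follows from co-coercivity of the gradient of a convex $\beta$-smooth function, $\langle \nabla\ell(u;z')-\nabla\ell(v;z'),\,u-v\rangle \ge \tfrac1\beta\norm{\nabla\ell(u;z')-\nabla\ell(v;z')}^2$, substituted into the expansion of $\norm{(u-\alpha_t\nabla\ell(u;z'))-(v-\alpha_t\nabla\ell(v;z'))}^2$. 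Second, by $L$-Lipschitzness we have $\norm{\nabla\ell(\cdot;z')} \le L$, so a single SGD step displaces any iterate by at most $\alpha_t L$; hence whenever the two runs use different update functions at step $t$, the triangle inequality gives $\delta_{t+1} \le \delta_t + 2\alpha_t L$.

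Now assemble the recursion. At step $t$, since $S$ and $S'$ differ in exactly one of $k$ positions, the shared index lands on that position with probability $1/k$ and otherwise both runs apply the same update function to their current iterates. In the latter case non-expansiveness gives $\delta_{t+1}\le\delta_t$; in the former, $\delta_{t+1}\le\delta_t+2\alpha_t L$. Taking expectation over the step-$t$ index and then over the history,
\[
\E{\delta_{t+1}} \le \brackets{1-\tfrac1k}\E{\delta_t} + \tfrac1k\brackets{\E{\delta_t}+2\alpha_t L} = \E{\delta_t} + \frac{2\alpha_t L}{k}.
\]
Summing over $t=1,\dots,T$ with $\delta_0=0$ yields $\E{\delta_T} \le \frac{2L}{k}\sum_{t=1}^T\alpha_t$, and $L$-Lipschitzness then gives $\sup_z \E{\ldiff} \le L\,\E{\delta_T} \le \frac{2L^2}{k}\sum_{t=1}^T\alpha_t$, the claimed expected uniform stability.

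\myparagraph{Main obstacle.} The delicate part is the non-expansiveness lemma: pinning down the constant requires the co-coercivity inequality and makes essential use of the step-size restriction $\alpha_t \le 2/\beta$, since for larger step sizes $G_t$ is only expansive with a factor exceeding $1$ and the clean telescoping above fails (one would instead get a product of expansion factors). A secondary point to handle carefully is the probabilistic model: the expectations in the stability definition are over $\Pi(S)$ and $\Pi(S')$ separately, and the argument relies on coupling these to a single shared index sequence so that the per-step "collision probability" is exactly $1/k$; this coupling is valid precisely because $\abs{S}=\abs{S'}=k$.
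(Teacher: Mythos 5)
Your proposal is correct and is essentially the standard coupling argument of \citet{hardt_train_2016} (non-expansiveness of the gradient update via co-coercivity for $\alpha_t\le 2/\beta$, the $1/k$ collision probability giving $\E{\delta_{t+1}}\le\E{\delta_t}+2\alpha_t L/k$, telescoping, then $L$-Lipschitzness), which is precisely the proof this paper relies on by citation rather than reproving. No gaps; the only point worth noting is that the theorem as used here concerns same-size neighboring coalitions, so your coupling with exact collision probability $1/k$ is the right model.
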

The results in this section assume a  loss bounded within limits $\ell(w, z)\in [0, G]$, which is a common assumption in theoretical analysis including data valuation~\cite{hardt_train_2016,kwon2022beta,wang2023data}.

\subsubsection{Sample complexity for fixed subsequences}
We consider the setup where $\pi$ is a function that returns for every $S$ a fixed sequence $\pi(S)$, and $\pi(S)$ is a uniformly random choice from $\Pi(S)$. The stability in expectation concept can be applied to yield the following theorem:

\begin{theorem}\label{size_convex}
     When using Algorithm~\ref{alg:cap} to estimate $\svk$ for an SGD algorithm $\A_\pi$ with a convex, $\beta-$smooth and $L-$Lipschitz loss function, then
    \begin{equation*}
        m_k \geq \brackets{\frac{\frac{32T^2L^4}{\beta^2k^2}+\frac{8\s TL^2}{k\beta}+4\s a/3}{a^2}}\ln{\frac{2n}{b}},
    \end{equation*}
    suffices for $\Pr(\abs{\svkh - \svk}\geq a)\leq b/n$, and by union bound, it suffices to estimate the Shapley value:
    $ Pr\brackets{\abs{\hat{\sv}_i-\varphi_i} \geq a}\leq b.
    $
\end{theorem}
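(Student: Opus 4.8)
\emph{Proof plan.} The plan is to treat the $m_k$ draws of Algorithm~\ref{alg:cap} as i.i.d.\ samples and apply Bernstein's inequality. Write $Y_j = v(S_j\cup i)-v(S_j)$ for the $j$-th sampled coalition $S_j$ (drawn uniformly over $S_k$ together with the internal randomness of $\A_\pi$), so that $\mathbb{E}[Y_j]=\svk$ and $\svkh=\frac{1}{m_k}\sum_{j=1}^{m_k}Y_j$. Bernstein then requires two ingredients: an almost-sure range bound on $Y_j-\svk$, and a bound on $\mathrm{Var}(Y_j)$. For the range, the assumption $\ell(w,z)\in[0,\s]$ gives $\bigL(w)\in[0,\s]$, hence $v(S)\in[-\s,0]$ and $Y_j=\bigL(\A(S_j))-\bigL(\A(S_j\cup i))\in[-\s,\s]$; taking expectations, $\abs{\svk}\le\s$, so $\abs{Y_j-\svk}\le 2\s=:M$.

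The crux is the variance bound. Since $S_j$ and $S_j\cup i$ are neighbouring datasets, Theorem~\ref{thm:stabconvex} applies with step sizes $\alpha_t\le 2/\beta$, giving $\sum_{t\le T}\alpha_t\le 2T/\beta$ and therefore expected uniform stability with $\epsilon\le\frac{2L^2}{k}\sum_t\alpha_t\le\frac{4TL^2}{k\beta}$. Writing $Y_j=-\frac{1}{\abs{D_e}}\sum_{z\in D_e}\big(\ell(\A(S_j\cup i);z)-\ell(\A(S_j);z)\big)$, Jensen's inequality gives $Y_j^2\le\frac{1}{\abs{D_e}}\sum_z\big(\ell(\A(S_j\cup i);z)-\ell(\A(S_j);z)\big)^2\le\frac{\s}{\abs{D_e}}\sum_z\big|\ell(\A(S_j\cup i);z)-\ell(\A(S_j);z)\big|$, since each term lies in $[-\s,\s]$. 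Taking expectations and applying the (coupled) expected-stability bound termwise yields $\mathrm{Var}(Y_j)\le\mathbb{E}[Y_j^2]\le\s\epsilon\le\epsilon^2+\s\epsilon=:\sigma^2$, which by the bound on $\epsilon$ is at most $\frac{16T^2L^4}{\beta^2k^2}+\frac{4\s TL^2}{k\beta}$.

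Plugging $M=2\s$ and this $\sigma^2$ into Bernstein's inequality, $\Pr(\abs{\svkh-\svk}\ge a)\le 2\exp\!\big(-\tfrac{m_k a^2/2}{\sigma^2+Ma/3}\big)$; forcing the right-hand side below $b/n$ and solving for $m_k$ gives $m_k\ge\frac{2\sigma^2+2Ma/3}{a^2}\ln\frac{2n}{b}$, and substituting the expressions above reproduces exactly the stated formula. For the Shapley guarantee, recall $\sv_i=\frac1n\sum_{k=0}^{n-1}\svk$; a union bound over the $n$ layers makes all events $\{\abs{\svkh-\svk}\le a\}$ hold simultaneously with probability at least $1-b$, and on that event $\abs{\hat\sv_i-\sv_i}\le\frac1n\sum_{k}\abs{\svkh-\svk}\le a$.

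The step I expect to be the main obstacle is the variance bound, specifically the termwise use of expected stability on $\mathbb{E}\big|\ell(\A(S\cup i);z)-\ell(\A(S);z)\big|$: the stability-in-expectation definition is stated for the \emph{signed} difference, so one must appeal to the coupling underlying Theorem~\ref{thm:stabconvex} (running SGD on $S$ and $S\cup i$ with synchronized index selection), under which the \emph{absolute} difference is controlled in expectation. Without this, the second moment of $Y_j$ could be $\Theta(\s^2)$ rather than $O(\s\epsilon)$, and the $1/k^2$ scaling of $m_k$ would be lost. A secondary point to handle carefully is that $Y_j$ carries two independent sources of randomness (the coalition $S_j$ and the SGD trajectory); the bound $\mathbb{E}[Y_j^2]\le\s\epsilon$ above is taken over both, which is all Bernstein needs.
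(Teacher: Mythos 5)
Your proposal is correct and follows essentially the same route as the paper: bound the second moment of the marginal contribution via expected uniform stability (the paper's Lemmas on $\E{(\ldiff)^2}$ and its test-set average), then apply a two-sided Bernstein inequality with range $2\s$ and take a union bound over the $n$ layers. The only cosmetic difference is that you obtain $\E{Y_j^2}\le \s\epsilon$ from the elementary bound $x^2\le \s\abs{x}$ for $\abs{x}\le\s$, whereas the paper uses the Bhatia--Davis variance inequality to reach the same $\epsilon^2+\epsilon\s$; and the coupling issue you flag (needing the stability bound for the \emph{absolute} loss difference) is handled identically, since the paper's lemma likewise assumes $\E{\abs{\ldiff}}\le\epsilon$, which is what the Hardt et al.\ argument actually delivers.
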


Since the absolute limit on stability does not hold (which is required for the Chernoff-Hoeffding bound), the proof of the theorem uses the Bernstein bound. The theorem assumes that the query item $i$ is inserted at an arbitrary position in the sequence $\pi(S)$. Next, building upon this result, we will address the expected utility scenario.

\subsubsection{Sample complexity for expected utility}
Suppose $S$ is a coalition of size $k$ not containing $i$, and $S^\prime=S\cup i$. Then the expected marginal utility can be computed as:

\begin{equation}
    v_i(S) = \frac{1}{(k+1)!} \sum_{\pi(S^\prime) \in \Pi(S^\prime)} \bigL(\A_{\pi(S^\prime)}(S)) - \bigL(A_{\pi(S^\prime)}(S^\prime))
\end{equation}
where $\A_{\pi(S^\prime)}(S)$ is computed by inserting a suitable {\em null} element at the position of $i$. In the SGD context, the zero gradient (implemented simply by excluding $i$ in the sequence) is the natural choice. This marginal utility is expensive to compute. We can estimate it by taking a sample $H\subseteq \Pi(S^\prime)$ of size $\abs{H}=\vsample$:
$  \hat{v}_i(S) = \frac{1}{\vsample}\sum_{\pi(S^\prime) \in H} \bigL(\A_{\pi(S^\prime)}(S)) - \bigL(A_{\pi(S^\prime)}(S^\prime))$

The following theorem provides an estimate of the number of permutation samples $\vsample$ needed based on the stability of SGD:
\begin{theorem}\label{thm:t_0variance_varphi}
Suppose that, under the fixed subsequence interpretation, the number of samples required to compute $\svkh$ such that $Pr\brackets{\abs{\svkh-\svk} \geq \frac{a}{2}} \leq \frac{b}{2n}$ using an algorithm satisfying expected uniform stability with constant $\epsilon$ and loss function $\ell \in [0,\s]$ is $m_k$, then we can sample
    \begin{equation}\label{eq:singlecoalition}
        \vsample \geq \frac{8\brackets{\bound+\s a/3}}{a^2}\ln{\frac{4n m_k}{b}}
    \end{equation}
    permutations of each coalition $S \in \hat{S}_k$, where $\hat{S}_k$ is the set of $m_k$ coalitions sampled to compute $\svkh$ such that we can compute estimate $\svkho$ of $\svkh$ satisfying $Pr\brackets{\abs{\svkho-\svk} \geq a} \leq \frac{b}{n}$.
\end{theorem}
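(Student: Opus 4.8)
The plan is a two--level concentration argument that isolates the error of $\svkho$ coming from having sampled only $m_k$ coalitions from the error coming from estimating each coalition's expected marginal utility with only $\vsample$ permutations. Condition on the sampled multiset $\hat S_k=\{S_1,\dots,S_{m_k}\}$ and introduce the auxiliary quantity $\bar\mu:=\frac{1}{m_k}\sum_{S\in\hat S_k}v_i(S)$, the average over $\hat S_k$ of the \emph{exact} expected marginal utilities $v_i(S)=\mathbb{E}_{\pi(S\cup i)}\!\left[\bigL(\A_{\pi}(S))-\bigL(\A_{\pi}(S\cup i))\right]$. Then
\[
  \abs{\svkho-\svk}\;\le\;\abs{\svkho-\bar\mu}\;+\;\abs{\bar\mu-\svk},
\]
and it suffices to show each term is at most $a/2$ with failure probability at most $b/(2n)$.

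\noindent\emph{Coalition term $\abs{\bar\mu-\svk}$.} I would argue that the $m_k$ already posited by the hypothesis is \emph{a fortiori} enough here. By expected uniform stability (Theorem~\ref{thm:stabconvex}), every exact marginal satisfies $\abs{v_i(S)}\le\epsilon\le\s$, hence $\mathbb{E}[(v_i(S))^2]\le\epsilon^2\le\bound$. Thus $\bar\mu$ is an average of $m_k$ i.i.d.\ samples whose range and second moment are dominated by those of the single--permutation marginals $v(S\cup i)-v(S)$ whose average $\svkh$ the hypothesis controls; consequently the very same Bernstein sample--size computation behind the hypothesis (i.e.\ Theorem~\ref{size_convex} run with $a$ and $b$ halved) yields $\Pr(\abs{\bar\mu-\svk}\ge a/2)\le b/(2n)$ with the same $m_k$. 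Equivalently, since $\bar\mu=\mathbb{E}[\svkh\mid\hat S_k]$, conditional Jensen gives $\mathbb{E}[e^{\lambda(\bar\mu-\svk)}]\le\mathbb{E}[e^{\lambda(\svkh-\svk)}]$ for all $\lambda$, so every Chernoff--type tail bound for $\svkh$ transfers verbatim to $\bar\mu$.

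\noindent\emph{Permutation term $\abs{\svkho-\bar\mu}$.} I would bound this coalition by coalition. Fix $S\in\hat S_k$ and write $\hat v_i(S)=\frac{1}{\vsample}\sum_{l=1}^{\vsample}Y_l$ with $Y_l=\bigL(\A_{\pi_l}(S))-\bigL(\A_{\pi_l}(S\cup i))$ i.i.d.\ over the $\vsample$ sampled permutations, so $\mathbb{E}[Y_l]=v_i(S)$ and $\abs{Y_l}\le\s$ since $\ell\in[0,\s]$. Using that the stability proof of Theorem~\ref{thm:stabconvex} in fact controls $\sup_z\mathbb{E}_{\pi}\abs{\ell(\A(S);z)-\ell(\A(S\cup i);z)}\le\epsilon$, averaging over the test set preserves the bound, so $\mathbb{E}\abs{Y_l}\le\epsilon$ and hence $\mathbb{E}[Y_l^2]\le\s\,\mathbb{E}\abs{Y_l}\le\s\epsilon\le\bound$. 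Applying Bernstein's inequality to the centered variables $Y_l-v_i(S)$ (variance at most $\bound$, range at most $2\s$) with deviation $a/2$ and failure probability $\tfrac{b}{2nm_k}$ gives exactly
\[
  \vsample\;\ge\;\frac{8\brackets{\bound+\s a/3}}{a^2}\ln\frac{4nm_k}{b},
\]
so $\Pr(\abs{\hat v_i(S)-v_i(S)}\ge a/2)\le \tfrac{b}{2nm_k}$. A union bound over the $m_k$ coalitions in $\hat S_k$ then guarantees, with probability at least $1-b/(2n)$, that $\abs{\hat v_i(S)-v_i(S)}\le a/2$ simultaneously for all $S\in\hat S_k$, whence $\abs{\svkho-\bar\mu}\le\frac{1}{m_k}\sum_{S\in\hat S_k}\abs{\hat v_i(S)-v_i(S)}\le a/2$. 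A final union bound over the two events ($b/(2n)$ each) gives $\abs{\svkho-\svk}\le a$ with probability at least $1-b/n$.

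\noindent\emph{Main obstacle.} The delicate step is the coalition term: the hypothesis only \emph{asserts} that a value $m_k$ works for $\svkh$, so to reuse it for the lower--variance $\bar\mu$ one must either open up the Bernstein bound behind the hypothesis and re--run it with the smaller variance/range of the $v_i(S)$, or appeal to the MGF--domination (conditional Jensen) observation above. Everything else is routine bookkeeping, the only point needing care being the second--moment estimate $\mathbb{E}[Y_l^2]\le\bound$, which relies on the absolute (rather than merely signed) form of expected uniform stability --- a property the SGD stability proof of Theorem~\ref{thm:stabconvex} does supply, via Lipschitzness and the bound on $\mathbb{E}\norm{\A(S)-\A(S\cup i)}$.
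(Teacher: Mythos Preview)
Your proposal is correct and follows the same two--level argument as the paper: apply Bernstein per coalition to the permutation average $\hat v_i(S)$ with second--moment proxy $\epsilon^2+\epsilon\s$, union--bound over the $m_k$ coalitions to control $\abs{\svkho-\bar\mu}$ by $a/2$ with probability $1-b/(2n)$, and then combine with the coalition--level bound via a final union bound.

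The one place you go beyond the paper is precisely the step you flag as the main obstacle. The paper, in its proof, silently redefines $\svkh$ to be $\frac{1}{m_k}\sum_{S\in\hat S_k}v_i(S)$ with $v_i(S)$ the \emph{exact} expected marginal---i.e.\ your $\bar\mu$---and then invokes the hypothesis directly for $\Pr(\abs{\svkh-\svk}\ge a/2)\le b/(2n)$, without commenting on the fact that the hypothesis is stated for the fixed--subsequence estimator (one random permutation per coalition). Your conditional--Jensen/MGF--domination argument, $\bar\mu=\mathbb{E}[\svkh\mid\hat S_k]\Rightarrow\mathbb{E}[e^{\lambda(\bar\mu-\svk)}]\le\mathbb{E}[e^{\lambda(\svkh-\svk)}]$, is a clean way to make this transfer rigorous and is a genuine clarification of a step the paper leaves implicit. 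Your route to the second--moment bound ($\mathbb{E}[Y_l^2]\le\s\,\mathbb{E}\abs{Y_l}\le\s\epsilon\le\epsilon^2+\epsilon\s$) is also slightly more direct than the paper's, which goes through a Bhatia--Davis variance bound (its Lemma on $\mathbb{E}[(\bigL-\bigL')^2]$); both land on the same constant.
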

\begin{corollary}\label{cor:repeated_sampling_convex}
    For each layer the number of evaluated samples necessary to estimate $\svkho$ such that $Pr\brackets{\abs{\svkho-\svk} \geq a} \leq \frac{b}{n}$ is at least $m_k\frac{8\brackets{\bound+\s a/3}}{a^2}\ln{\frac{4n m_k}{b}}$. Therefore, for convex optimisation using SGD, the number $h$ of evaluated permutations per layer $k$ must satisfy
    \begin{equation}
        h \geq m_k\frac{8\brackets{\frac{16T^2L^4}{\beta^2k^2}+\frac{4\s TL^2}{k\beta} + \s a /3}}{a^2}\ln{\frac{4nm_k}{b}}
    \end{equation}
    under the assumptions of Theorems \ref{thm:stabconvex} and \ref{thm:t_0variance_varphi}.
\end{corollary}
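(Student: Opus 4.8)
The plan is to obtain the corollary as a direct composition of Theorem~\ref{thm:t_0variance_varphi} with Theorem~\ref{thm:stabconvex}: essentially all that is required is to substitute the convex--SGD expected-stability constant into the generic permutation bound, together with a short monotonicity remark.

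First I would apply Theorem~\ref{thm:t_0variance_varphi} as a black box. It states that, given $m_k$ coalitions sufficient for a fixed-subsequence estimate $\svkh$ at accuracy $a/2$ and confidence $b/(2n)$, drawing $\vsample \ge \frac{8\brackets{\bound+\s a/3}}{a^2}\ln\frac{4nm_k}{b}$ permutations \emph{per coalition} yields an estimate $\svkho$ with $\Pr\brackets{\abs{\svkho-\svk}\ge a}\le b/n$. Each sampled permutation costs one model evaluation and there are $m_k$ coalitions in $\hat S_k$, so the number of evaluated permutations in layer $k$ needed to reach the target accuracy is $m_k\vsample$; inserting the lower bound on $\vsample$ gives the first displayed claim of the corollary.

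Next I would instantiate $\epsilon$. Under the hypotheses of Theorem~\ref{thm:stabconvex} (loss $\beta$-smooth, convex, $L$-Lipschitz, step sizes $\alpha_t\le 2/\beta$), SGD on a coalition of size $k$ is uniformly stable in expectation with $\epsilon \le \frac{2L^2}{k}\sum_{t=1}^T\alpha_t \le \frac{4TL^2}{k\beta}$, the last step using $\sum_{t=1}^T\alpha_t\le 2T/\beta$. Hence $\epsilon^2\le \frac{16T^2L^4}{\beta^2k^2}$ and $\epsilon\s\le\frac{4\s TL^2}{k\beta}$, so $\bound+\s a/3 = \epsilon^2+\epsilon\s+\s a/3 \le \frac{16T^2L^4}{\beta^2k^2}+\frac{4\s TL^2}{k\beta}+\s a/3$. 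Substituting this upper bound for $\bound+\s a/3$ in the expression for $\vsample$ and multiplying by $m_k$ reproduces exactly the displayed bound on $h$.

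The one point I would be careful about is the direction of the estimates: the corollary is really a sufficiency statement (``this many evaluated permutations guarantee the error bound''), so every replacement of a quantity by an upper bound must \emph{enlarge} the prescribed sample count, which is the case here because the right-hand side of Theorem~\ref{thm:t_0variance_varphi} is monotone increasing in $\bound$ and the factor $\ln\frac{4nm_k}{b}$ does not involve $\epsilon$. I would also note, for consistency, that $m_k$ in this statement is the fixed-subsequence sample size at parameters $(a/2, b/(2n))$ supplied by Theorem~\ref{size_convex}; leaving it symbolic, as the corollary does, is harmless since it enters only through the mild $\ln(\cdot)$ term.
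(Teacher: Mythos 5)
Your proposal is correct and follows essentially the same route as the paper's own (very terse) proof: multiply the per-coalition permutation count from Theorem~\ref{thm:t_0variance_varphi} by the $m_k$ coalitions, then substitute $\epsilon \le \frac{2L^2}{k}\sum_{t=1}^T\alpha_t \le \frac{4TL^2}{k\beta}$ from Theorem~\ref{thm:stabconvex} into $\bound$. Your added remark about the monotonicity of the bound in $\bound$ is a sensible extra check that the paper leaves implicit.
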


Thus, in the expected utility case, the randomness of $SGD$ itself introduces uncertainty and multiples the model trainings required to estimate $\svk$ and therefore $\sv_i$.

  \subsection{Sample complexity for non-convex SGD}

SGD with non-convex loss function is one of the most important algorithmic problems in machine learning as many practical models classes -- including neural networks -- have a non-convex loss function. A non-convex function can have multiple minima, and it is hard to guarantee the quality of results from a gradient descent process. For such algorithms the following bound is known for expected uniform stability.

\begin{theorem}[Theorem 3.8 in \cite{hardt_train_2016}]\label{Hardtbound}
Assume that $\ell(\cdot; z) \in [0, \s]$ is an L-Lipschitz and $\beta$-smooth loss function for
every z. Suppose that we run SGD for T steps with monotonically non-increasing step sizes $\alpha_t \leq c/t$. Then, SGD satisfies uniform stability in expectation with
    \begin{equation}
         \epsilon \leq \s^{\frac{\beta c}{\beta c+1}}(2cL^2)^{\frac{1}{\beta c+1}}T^{\frac{\beta c}{\beta c+1}}\frac{1+\frac{1}{\beta c}}{k-1} = H_k,
    \end{equation}
    where $k$ is the size of the dataset.
\end{theorem}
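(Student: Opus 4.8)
Since this is the non-convex expected-stability bound of \citet{hardt_train_2016}, I would follow their coupling and ``growth recursion'' strategy. First, couple two runs of SGD on neighboring datasets $S$ and $S'$ by feeding them the same sequence of sampled indices, and track $\delta_t = \norm{w_t - w_t'}$. At a step $t$ whose sampled index lies where $S$ and $S'$ agree, both runs apply the \emph{same} gradient update; since $\ell(\cdot;z)$ is $\beta$-smooth the map $w\mapsto w-\alpha_t\nabla\ell(w;z)$ is $(1+\alpha_t\beta)$-expansive (non-expansiveness, available in the convex case, fails here), so $\delta_{t+1}\le(1+\alpha_t\beta)\delta_t$. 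At a step that hits the one differing coordinate, $L$-Lipschitzness bounds both gradients by $L$, giving $\delta_{t+1}\le\delta_t+2\alpha_t L$. This bad step occurs with probability $\tfrac{1}{k-1}$ under the coupling and $1+\alpha_t\beta\ge 1$, so taking expectations yields the one-step recursion $\E{\delta_{t+1}}\le(1+\alpha_t\beta)\E{\delta_t}+\tfrac{2\alpha_t L}{k-1}$.

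Second, introduce the truncation parameter $t_0\in\{1,\dots,T\}$ and condition on the event $E$ that the differing coordinate is never sampled in the first $t_0$ steps; a union bound gives $\Pr[E^c]\le \tfrac{t_0}{k-1}$, while on $E$ we have $\delta_{t_0}=0$. Unrolling the recursion from $t_0$ to $T$ on $E$, using $\alpha_t\le c/t$ and the elementary estimate $\prod_{s=t+1}^{T}\big(1+\tfrac{\beta c}{s}\big)\le\exp\big(\beta c\sum_{s=t+1}^{T}\tfrac1s\big)\le(T/t)^{\beta c}$, and then bounding the resulting sum by $\int_{t_0}^{\infty}x^{-\beta c-1}\,dx=\tfrac{1}{\beta c}t_0^{-\beta c}$, I obtain $\E{\delta_T\mid E}\le \frac{2L T^{\beta c}}{(k-1)\beta}\,t_0^{-\beta c}$. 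Splitting on $E$ versus $E^c$, applying $L$-Lipschitzness on $E$ and the range $\ell\in[0,\s]$ on $E^c$, gives
\[ \epsilon \;=\; \sup_z\E{\ell(\A(S);z)-\ell(\A(S');z)} \;\le\; \frac{t_0\,\s}{k-1} \;+\; \frac{2L^2\,T^{\beta c}}{(k-1)\beta}\,t_0^{-\beta c}. \]

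Third, optimize the right-hand side over $t_0$: it is convex in $t_0$, so setting the derivative to zero gives $t_0=\big(2cL^2 T^{\beta c}/\s\big)^{1/(\beta c+1)}$; substituting back, the two terms combine with coefficients $1$ and $\tfrac{1}{\beta c}$, and collecting the powers of $\s$, $2cL^2$ and $T$ yields exactly $\epsilon\le \s^{\frac{\beta c}{\beta c+1}}(2cL^2)^{\frac{1}{\beta c+1}}T^{\frac{\beta c}{\beta c+1}}\frac{1+1/(\beta c)}{k-1}$, provided the optimal $t_0$ lies in $[1,T]$ (otherwise the trivial $\epsilon\le\s$ already dominates). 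I expect the obstacles here to be bookkeeping rather than conceptual: keeping the conditional-expectation and coupling argument clean given that the non-convex update is only $(1+\alpha_t\beta)$-expansive (so the product $\prod_s(1+\beta\alpha_s)$ must be retained and controlled via the $(T/t)^{\beta c}$ bound rather than dropped), making the union bound over the first $t_0$ steps tight enough to land the $k-1$ denominator, and carrying the exponent algebra in the final minimization without slips.
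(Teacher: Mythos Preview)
Your proposal is correct and follows essentially the same route as the paper: obtain the $t_0$-parameterized intermediate bound from \citet{hardt_train_2016} via the coupling/growth-recursion argument, then minimize over $t_0$ to get the stated expression. The only cosmetic difference is that the paper quotes the Hardt et al.\ bound with $t_0\s/k$ in the first summand (and later approximates $\sqrt[\beta c+1]{k/(k-1)}\approx 1$), whereas you carry $k-1$ in both summands from the start---a valid upper bound that lands on the exact constant without the approximation step.
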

Note that due to $H_k$ being undefined for $k=1$, we restrict our analysis to layer sizes $2\dots,n-1$ in this section.

\subsubsection{Sample complexity for fixed subsequences with non-convex loss}
As in the convex case, we can employ Bernstein's inequality to derive a sample size.

\begin{theorem}\label{m_k size}
    Assume an algorithm fulfilling the assumptions of Theorem \ref{Hardtbound} is employed to compute Shapley values using Algorithm \ref{algo:stratified-sampling}. In each layer $k$, let $m_k$ be the number of coalitions sampled and let $\svkh$ be the estimate of $\svk$. Then, if
    \begin{equation}
        m_k \geq 2\ln{\frac{2n}{b}}\brackets{\frac{2\hat{H}_k^2+2\s \hat{H}_k+4\s a/3}{a^2}},
    \end{equation}
    where
    \begin{equation}
        \hat{H}_k =\s^{\frac{\beta c}{\beta c+1}}(2cL^2)^{\frac{1}{\beta c+1}}T^{\frac{\beta c}{\beta c+1}}\frac{1+\frac{1}{\beta c}}{k-1},
    \end{equation}
    then
    \begin{equation}
        Pr\brackets{\abs{\hat{\sv}_i-\sv_i} \geq a}\leq b
    \end{equation}
\end{theorem}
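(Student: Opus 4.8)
The plan is to follow the template of the convex case (Theorem~\ref{size_convex}): express $\svkh$ as an empirical mean of i.i.d.\ marginal contributions whose variance is controlled by the non-convex expected-stability bound of Theorem~\ref{Hardtbound}, apply Bernstein's inequality to get a per-layer guarantee, and close with a union bound over layers. Fix $i$ and a layer size $k\in\{2,\dots,n-1\}$. Algorithm~\ref{algo:stratified-sampling} returns $\svkh=\frac1{m_k}\sum_{j=1}^{m_k}X_j$, where $X_j=v_i(S_j)=v(S_j\cup i)-v(S_j)$ and the $S_j\in S_k$ (together with the random SGD sequences) are drawn independently, so the $X_j$ are i.i.d.\ with $\E{X_j}=\svk$. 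To make the two runs appearing in $X_j$ a comparison of \emph{neighboring} datasets, I would couple them: evaluate $v(S_j)$ with the same sequence $\pi(S_j\cup i)$ used for $v(S_j\cup i)$, just skipping (zero-gradient) the update on $i$. Then $\A_{\pi}(S_j)$ and $\A_{\pi}(S_j\cup i)$ are SGD runs on datasets differing in a single element with shared randomness, so Theorem~\ref{Hardtbound} bounds their expected stability by $\hat H_k$ (with at most a harmless shift in the size index, since $S_j$ and $S_j\cup i$ differ in size by one).

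The two quantities Bernstein needs are now straightforward. Since $\ell(\cdot,z)\in[0,\s]$ we have $\bigL(\cdot)\in[0,\s]$, giving the almost-sure bound $\abs{X_j}\le\s$ (hence $\abs{X_j-\svk}\le2\s$). For the variance, averaging the stability inequality over $z\in D_e$ and then over the random coalition (tower property) gives $\E{\abs{X_j}}\le\hat H_k$; combined with $\abs{X_j}\le\s$ this yields $\E{X_j^2}\le\s\,\E{\abs{X_j}}\le\s\hat H_k$, and since $\abs{\svk}=\abs{\E{X_j}}\le\hat H_k$ we get $\mathrm{Var}(X_j)\le 2\E{X_j^2}+2(\svk)^2\le 2\s\hat H_k+2\hat H_k^2$. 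Feeding $M=O(\s)$ for the range and $\sigma^2\le 2\hat H_k^2+2\s\hat H_k$ for the variance into Bernstein's inequality for the i.i.d.\ sum $\svkh$, and solving the tail bound $\le b/n$ for $m_k$, reproduces the stated expression (the $\s a/3$ term being exactly the linear Bernstein correction). A Hoeffding bound using the only available \emph{absolute} bound $\s$ would leave $m_k$ independent of $k$; it is the small variance $O(\s\hat H_k)=O(\s/k)$, visible only through Bernstein, that produces the $1/k$-type decay, and it is precisely because expected stability controls an expectation rather than a worst case that Bernstein is the right tool here.

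Finally, $\hat\sv_i$ aggregates the $\svkh$ and $\sv_i$ the $\svk$ over the layers considered, so on the event that $\abs{\svkh-\svk}<a$ for every layer simultaneously, $\abs{\hat\sv_i-\sv_i}\le\frac1n\sum_k\abs{\svkh-\svk}<a$ (there being at most $n$ layers). Since each layer fails its bound with probability at most $b/n$, a union bound gives $\Pr(\abs{\hat\sv_i-\sv_i}\ge a)\le b$, as claimed. The main obstacle is the variance estimate: expected uniform stability is only an in-expectation statement, so the real work is (i) choosing the coupling that turns each $X_j$ into a difference of losses of runs on neighboring datasets, and (ii) converting $\E{\abs{X_j}}\le\hat H_k$ into a usable bound on $\mathrm{Var}(X_j)$ via the crude almost-sure bound $\abs{X_j}\le\s$; the Bernstein computation and the union bound over layers are then routine.
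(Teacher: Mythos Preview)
Your proposal is correct and follows the same skeleton as the paper: bound the second moment of the per-sample marginal contribution, feed it into Bernstein's inequality for a per-layer $(a,b/n)$ guarantee, then union-bound over layers (the paper's Lemma~\ref{lem:unionbound}). The only substantive difference is how you control the second moment. The paper invokes its Lemma~\ref{lem:LVar}, which is proved via the Bhatia--Davis inequality (Lemma~\ref{quaddiffbound}), to obtain $\E{X_j^2}\le \hat H_k^2+\s\hat H_k$; the factor~$2$ in the statement then comes from the Bernstein algebra. Your route---$X_j^2\le \s\,\lvert X_j\rvert$ from the almost-sure bound $\lvert X_j\rvert\le \s$, hence $\E{X_j^2}\le \s\hat H_k$---is more elementary and in fact sharper, making Bhatia--Davis unnecessary. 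Your line $\mathrm{Var}(X_j)\le 2\E{X_j^2}+2(\svk)^2$ is true but an odd detour (variance is already at most $\E{X_j^2}$); it looks like padding inserted solely to land on the exact constants in the statement. It would be cleaner to keep your tighter bound $\E{X_j^2}\le \s\hat H_k$, carry it through Bernstein, and observe that the stated $m_k$ is therefore a fortiori sufficient.
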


\subsubsection{Sample complexity for expected utility with non-convex loss}

In expected utility computation, the randomness of SGD again increases the computational complexity. The results for non-convex case are analogous to the convex case, but adjusted for the stability bounds for non-convex loss.
\begin{corollary}
    The number $h$ of permutations per layer $k$ must satisfy
    $ \vsample \geq \frac{8\brackets{\hat{H}_k^2+\s\hat{H}_k+\s a/3}}{a^2}\ln{\frac{4n m_k}{b}}$
    for non-convex optimization under the assumptions of Theorems \ref{thm:t_0variance_varphi} and \ref{m_k size}.
\end{corollary}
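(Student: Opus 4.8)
The plan is to obtain this corollary by specialising Theorem~\ref{thm:t_0variance_varphi} to the non-convex SGD setting, exactly as Corollary~\ref{cor:repeated_sampling_convex} specialised it to the convex case. Theorem~\ref{thm:t_0variance_varphi} is stated for an arbitrary algorithm satisfying expected uniform stability with some constant $\epsilon$ and loss bounded in $[0,\s]$, and yields the permutation sample size $\vsample \geq \frac{8(\bound + \s a/3)}{a^2}\ln\frac{4nm_k}{b}$, where $\bound = \epsilon^2 + \epsilon\s$. Theorem~\ref{Hardtbound} tells us that SGD run for $T$ steps with step sizes $\alpha_t \le c/t$ on an $L$-Lipschitz, $\beta$-smooth loss in $[0,\s]$ is expected uniformly stable on coalitions of size $k$ with $\epsilon \le H_k = \hat{H}_k$. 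So the first step is simply to take $\epsilon = \hat{H}_k$ in the conclusion of Theorem~\ref{thm:t_0variance_varphi}; since $\epsilon \mapsto \epsilon^2 + \epsilon\s$ is monotone increasing for $\epsilon \ge 0$, replacing $\epsilon$ by the valid upper bound $\hat{H}_k$ preserves the inequality, so $\bound + \s a/3 = \hat{H}_k^2 + \s\hat{H}_k + \s a/3$, which is precisely the claimed expression for $\vsample$.

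The one point that needs care is the chaining of hypotheses between the two theorems. Theorem~\ref{thm:t_0variance_varphi} requires $m_k$ to be large enough that the fixed-subsequence estimator $\svkh$ achieves $\Pr(\abs{\svkh - \svk} \ge a/2) \le b/(2n)$, whereas Theorem~\ref{m_k size} is phrased so that its $m_k$ delivers $(a, b/n)$-accuracy per layer (equivalently $\Pr(\abs{\hat{\sv}_i - \sv_i} \ge a) \le b$ after the union bound over the $n$ layers). To invoke Theorem~\ref{thm:t_0variance_varphi} I would therefore apply Theorem~\ref{m_k size} with the substitutions $a \mapsto a/2$ and $b \mapsto b/2$, which yields an $m_k$ for which the fixed-subsequence step meets the $(a/2, b/(2n))$ requirement; this changes $m_k$ only by constant factors and does not alter the form of the $\vsample$ bound, which depends on $m_k$ solely through the mild $\ln(4nm_k/b)$ term. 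The triangle inequality $\abs{\svkho - \svk} \le \abs{\svkho - \svkh} + \abs{\svkh - \svk}$ together with a union bound over the $m_k$ sampled coalitions splits the per-layer failure budget $b/n$ into the $b/(2n)$ charged to the fixed-subsequence error and the $b/(2n)$ spread across the $m_k$ permutation-sampling errors (each at level $b/(2nm_k)$), which is exactly what produces the $\ln\frac{4nm_k}{b}$ factor in \eqref{eq:singlecoalition}; since all of this is already internal to Theorem~\ref{thm:t_0variance_varphi}, here it only needs to be cited.

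I expect the proof to be essentially this one-line substitution plus the bookkeeping remark above, so there is no genuine obstacle. The only things to double-check are that $\hat{H}_k$ is a legitimate instantiation of $\epsilon$ — the step-size schedule $\alpha_t \le c/t$ assumed in Theorem~\ref{Hardtbound} is compatible with Theorem~\ref{thm:t_0variance_varphi}, which is algorithm-agnostic, so it is — and that $\hat{H}_k \ge 0$, needed for the monotonicity argument, which holds because every factor in the definition of $\hat{H}_k$ is non-negative and $k \ge 2$ keeps the denominator $k-1$ positive. Finally, as the surrounding text notes, the restriction to layers $k \in \{2,\dots,n-1\}$ is inherited directly from $\hat{H}_k$ (equivalently $H_k$) being undefined at $k=1$, so the resulting bound on $\vsample$ is asserted only on that range.
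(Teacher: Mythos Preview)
Your proposal is correct and follows essentially the same approach as the paper. The paper does not give an explicit proof of this corollary, but by direct analogy with its proof of Corollary~\ref{cor:repeated_sampling_convex} the intended argument is precisely the one-line substitution you describe: take the general bound of Theorem~\ref{thm:t_0variance_varphi} and instantiate the expected-stability constant $\epsilon$ with the non-convex bound $\hat{H}_k$ from Theorem~\ref{Hardtbound}. Your additional bookkeeping remarks (the monotonicity of $\epsilon \mapsto \epsilon^2 + \epsilon\s$, the $a\mapsto a/2$, $b\mapsto b/2$ adjustment of $m_k$, and the restriction $k\ge 2$) are more explicit than anything the paper spells out but are entirely in line with its reasoning.
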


\section{Restricted Layer Selection and $\delta-$Shapley}

In the previous section we saw that in multiple different kinds of learning scenarios, the expected stability of models trained on a dataset of size $k$ is at most $O(1/k)$. As a result, the expected marginal utility of any data point is also bounded by $O(1/k)$ in expectation. This observation suggests that on average it is the smaller coalition sizes that may be the main contributors to the Shapley value of data points.

On the other hand, when coalitions are very small, the stability is poorer and the noise from randomization of SGD is likely to have a greater effect. Further, it is generally known that in machine learning, small training datasets often produce noisy models, in the sense that the model may overfit to the training data, but generalise poorly to test data that is used to evaluate utility. This known fact suggests that marginal gains computed on small coalitions will be noisy, see the Appendix for experiments and further discussion.

These observations suggest that models in a certain {\em medium} size range are most likely to be effective -- where the coalitions are large enough to generalise well, but not so large that the marginal utility disappears. Thus, we can define a define a different value to approximate $\sv_i$ that only evaluates a selection of layers:

\begin{definition}[$\delta$-Shapley value]\label{def:deltashap}
    The $\delta$-Shapley value of datapoint $i$ in a training set $D$ of size $n$ with utility function $v$ is given by  \begin{equation}
        \varphi_i=\sum_{k=1}^n p_k \sum_{\substack{S\subseteq D\setminus\{i\} \\ |S|=k-1}} v_i(S)
    \end{equation}
    with
    \begin{equation}
        p_k=\begin{cases}
    0, & \text{if $k<B_L$ or $k>B_U$}.\\
    \frac{1}{B_U-B_L+1}, & \text{otherwise}.
        \end{cases}
    \end{equation}

    where $B_u$ represents the upper bound on layersize $k$ being evaluated and $B_L$ is the corresponding lower bound.
\end{definition}

Definition~\ref{def:deltashap} differs from the Shapley value as not all coalition sizes are evaluated, therefore belonging to a more general class of indexes that also includes the Shapley value; semi-values~\cite{dubey1981value}.
\begin{definition}~\cite{dubey1981value}
    A value function $\varphi$ of a set of size $n$ is a semivalue if and only if there exists a weight function $w:[1:n]\rightarrow\mathbb{R}$ such that $\sum_{k=1}^n {n-1 \choose k-1}w(k)=n$ and $\varphi_i$ can be expressed as follows:
    \begin{equation}
        \varphi_i=\sum_{k=1}^n \frac{w(k)}{n} \sum_{\substack{S\subseteq D\setminus\{i\} \\ |S|=k-1}} v_i(S)
    \end{equation}
\end{definition}
The Shapley value is the semi-value given by $w(k)={n-1 \choose k-1}^{-1}$. In general, semi-values must satisfy only three of the four Shapley axioms, excluding efficiency.\footnote{See~\citet{kwon2022beta,wang2023data} for a discussion of the relevance of the efficiency property for data valuation.} Semi-values can be viewed as a reweighting of the importance of each layer (each coalition size) in the Shapley value calculation, multiplying each marginal contribution by a value $w_k$ for layer $k\in[1:n-1]$.

Our previous results describe how many coalitions of each size $k$ should be evaluated to ensure an accurate estimation of the marginal contribution $v_i(S)$ of $i$ for layer $k$. These results can therefore be used to estimate how many evaluations will be needed to ensure an accurate approximation of the $\delta$-Shapley for each layer $k$.

\begin{figure*}[ht!]
\centering
\begin{tabular}{cccc}
\includegraphics[width=1.5in]{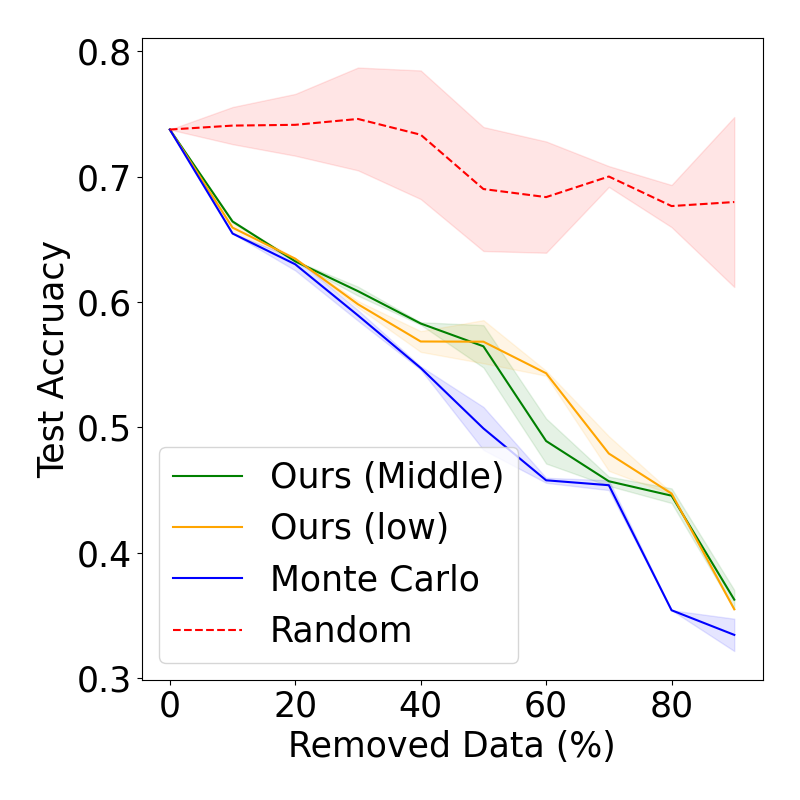} &
\includegraphics[width=1.5in]{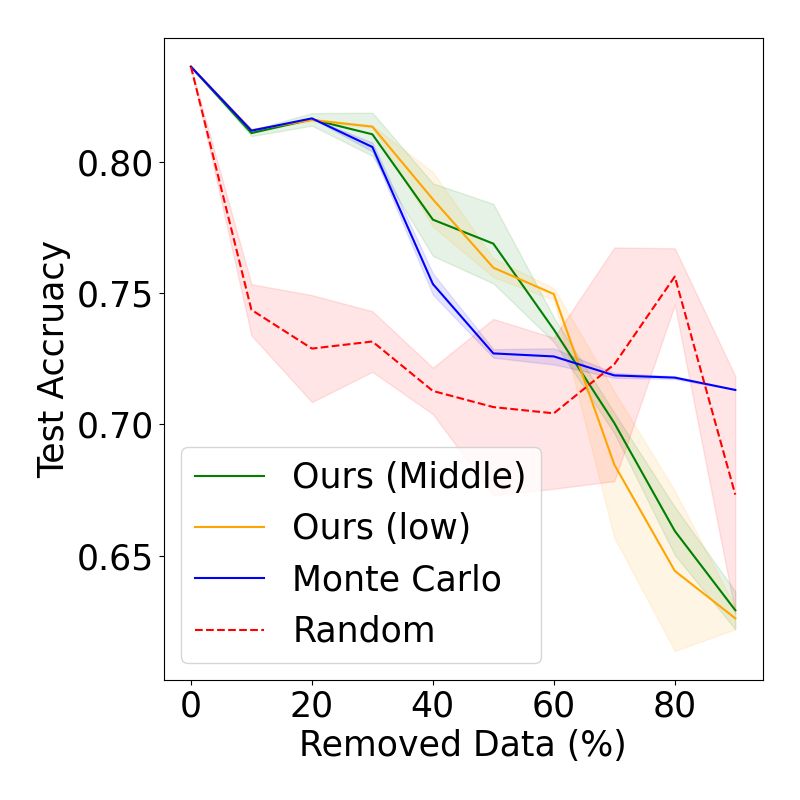}&\includegraphics[width=1.5in]{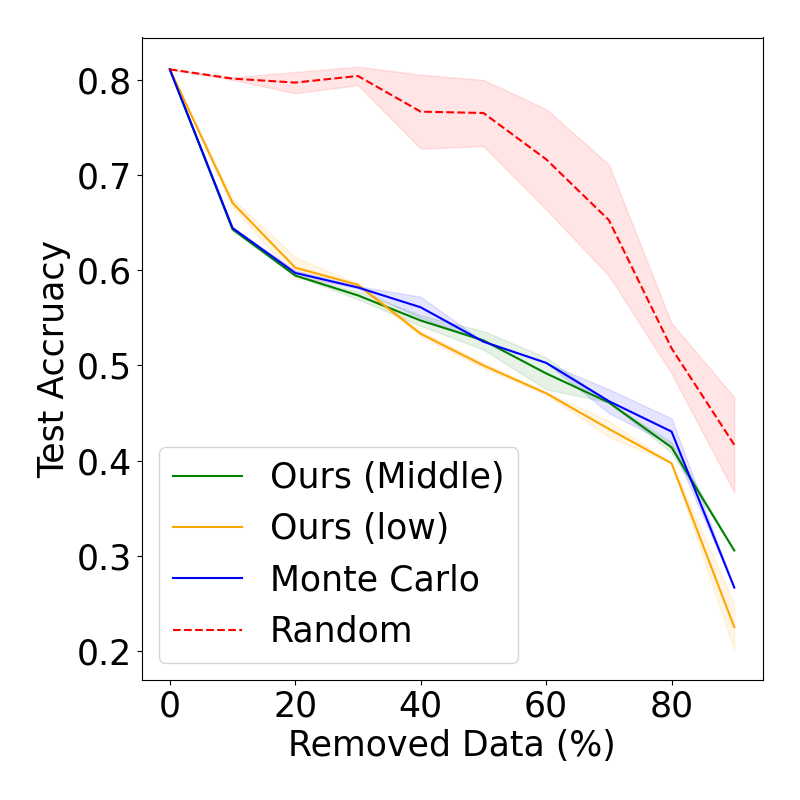}
&\includegraphics[width=1.5in]{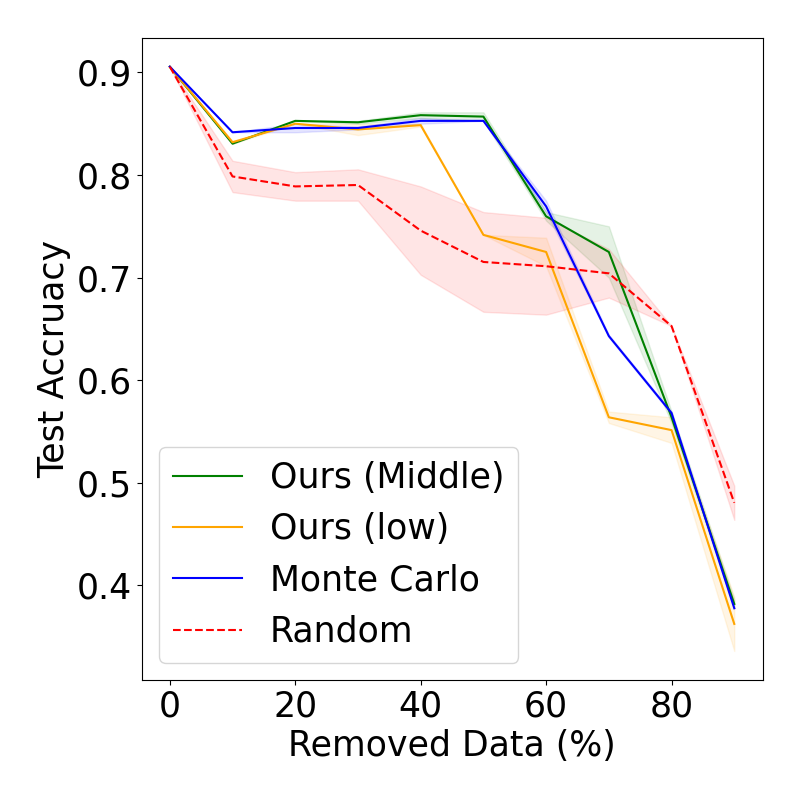}\\
(a) Adult (Large first) &  (b) Adult (Small first) & (c) Digits (Large first) & (d) Digits (Small first)  \\
\includegraphics[width=1.5in]{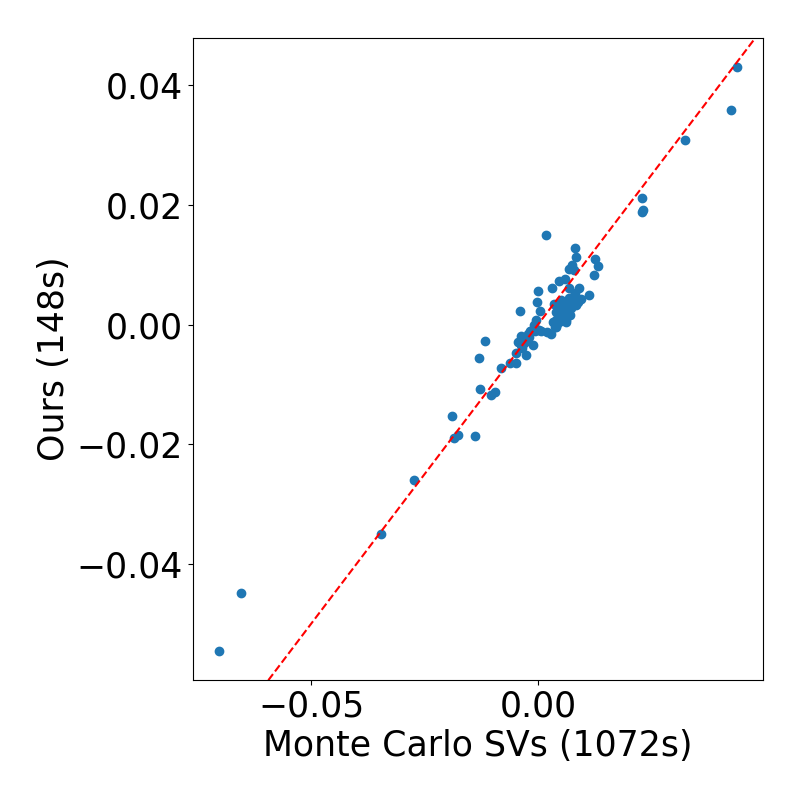} &\includegraphics[width=1.5in]{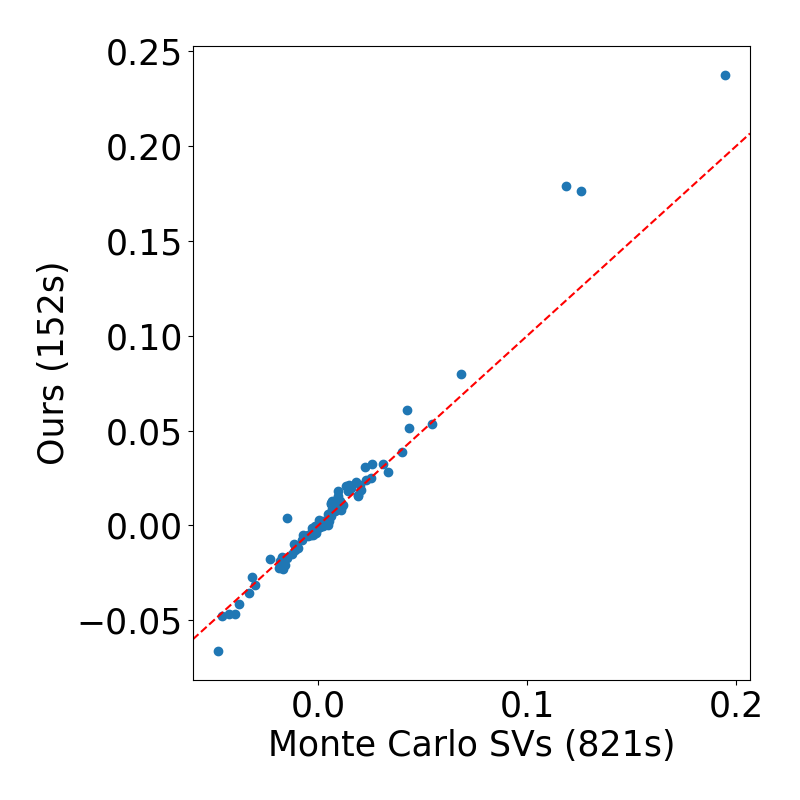} &
\includegraphics[width=1.5in]{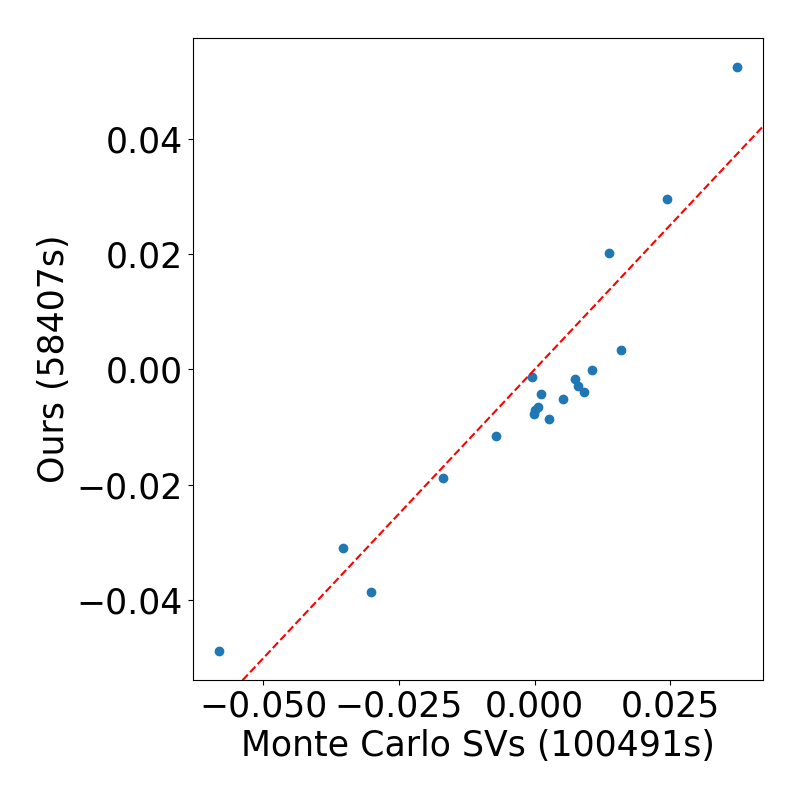} &
\includegraphics[width=1.5in]{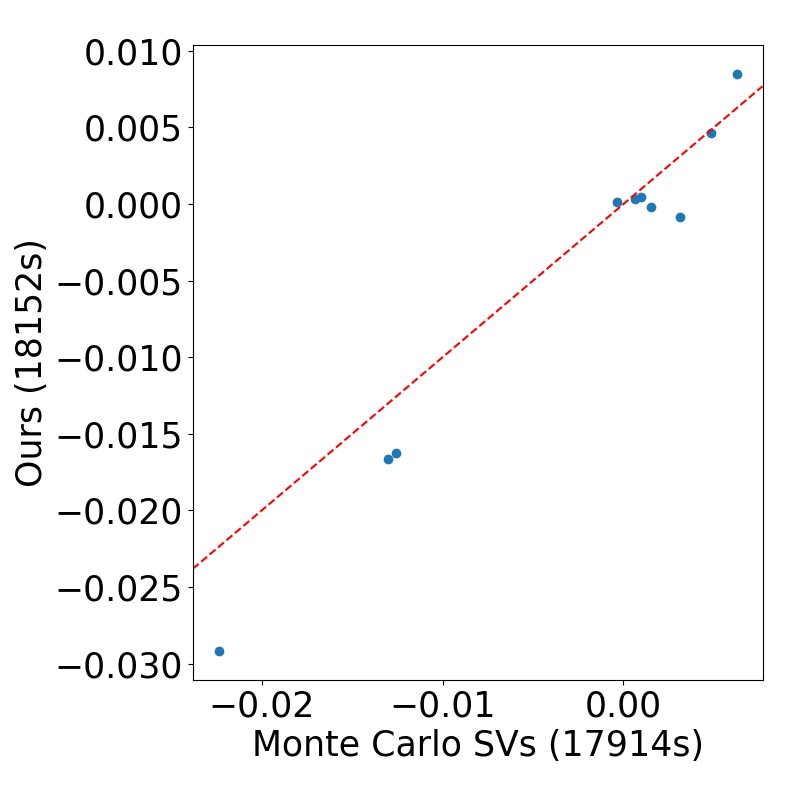}\\
e) Adult ($\rho=0.88$)  &f) Digits ($\rho=0.98$) & (g) BFM ($\rho=0.91$) & (h) BFM (pretrain, $\rho=0.84$)\\
\end{tabular}
 \caption{Change in accuracy due to removing points ranked by their Shapley value. The top row shows the effect on test accuracy when points with the highest or lowest values are removed first. We expect meaningful Shapley values to result in lines below and above random values respectively. The bottom row contains example correlation plots, with $y=x$ in red. }
\label{fig:sc-accs}
\end{figure*}

\section{Experiments}\label{Experiments}
The experiments outlined in this section empirically verify that Shapley values can be approximated via $\delta$-Shapley using relatively small coalitions. Specifically:
\begin{enumerate}
    \item Meaningful data valuations: they find relevant valuations of the dataset in the sense that they reliably identify points causing the greatest drop in test accuracy when removed from the training set.
    \item Accurate approximations of Monte Carlo Shapley values: they have high Spearman Rank correlation with the corresponding Monte Carlo Shapley values.
    \item Efficient: they are obtained using significantly less computation time (wall-clock time) in comparison to the Monte Carlo approach.
    \item On pretrained neural networks, even smaller coalitions suffice and computation is more efficient.
\end{enumerate}
\subsection{Experimental Setup}
\noindent\textbf{Datasets and Models: } All experiments use publicly available  classification datasets. In the convex case, we report results on the binary classification task given by the  Adult dataset ~\cite{kohavi1996scaling} as well as the Breast Cancer Wisonsin binary classification task
 and the 10 class Digits dataset which is a lightweight version of MNIST provided by the UCI Machine Learning Repository~\cite{frank2010uci}. In the non-convex case, we use a binary classification version of the FashionMNIST ~\cite{xiao2017} dataset, as described by~\citet{ghorbani_data_2019}, comparing class 0 (tops/t-shirts) to class 6 (shirts). Convex experiments implement regularized logistic regression with $\lambda=0.1$ via Scikit-Learn~\cite{pedregosa2011scikit} . For binary-fashionMNIST, a CNN with max-pooling and RELU activations is trained (PyTorch~\cite{paszke2017automatic} ). Further details on both the model and training (e.g hyperparameters) can be found in both the Appendix and codebase. Experiments were averaged over 3-5 runs.

\noindent\textbf{Sampling Strategies: } We approximate the $\delta$-Shapley value by sampling a layersize $k$ uniformly between the upper ($B_U$) and lower ($B_L$) layersize bounds in each iteration with $B_L=\frac{n}{3}$ and $B_U=\frac{2n}{3}$ (\textit{middle} sampling) and also with $B_L=\frac{2n}{10}$ and $B_U=\frac{3n}{10}$ (\textit{low} sampling).  The $\delta$-Shapley approximation is the average marginal contribution over  iterations.

\noindent\textbf{Comparison Algorithms: } The baseline comparison algorithm assigns a random shapley value $\phi_i\sim$Uniform($[0,1)$) to each datapoint. We also compare to the popular Monte Carlo sampling approach~\cite{ghorbani_data_2019} discussed above. The Monte Carlo algorithm samples permutations of the full dataset, selecting a comparison coalition for each datapoint being evaluated based on its position in this permutation.\footnote{When evaluating only a subset of datapoints from the entire dataset (e.g. for non-convex models) then only the subsets within each permutation corresponding to those points are used. This is in contrast to cycling through the entire permutation, to ensure that time-complexities are comparable.} The algorithm converges when the average deviation has converged, $\frac{1}{n}\sum_{i=1}^n \frac{\phi_i^t - \phi_i^{t-100}}{|\phi_i^t|} <0.05$.  This convergence criteria is used for all algorithms to ensure fair comparison, particularly in terms of wall-clock time.

We apply the \textit{fixed subsequence interpretation} of data valuation discussed previously. In our experiments we fix the order of the coalition selected in each marginal contribution calculation, varying only the place that the new datapoint is inserted into the coalition between $v(S)$ and $V(S\cup i)$.

\begin{table}[h!]
\centering
\begin{tabular}{llll}
\cline{1-4}
\textbf{Dataset    }    & \textbf{MC}(s) & \textbf{Ours}(s) & $\rho$\\ \cline{1-4}
Adult(mid)       &    1021($\pm$38)      &  153($\pm$5) &  0.87 ($\pm$.01)\\

Adult(low)       &   1021($\pm$38)     & 215($\pm$43)   & 0.85($\pm$.01)\\
Digits(mid)       &   817($\pm$5)   &   139($\pm$13) &0.98($\pm$.004)  \\
Digits(low)       &   817 ($\pm$5)    &   82($\pm$9) &0.90($\pm$.005)\\
Cancer(mid)       &   586($\pm$13)   & 263($\pm$43)     & 0.85($\pm$0.02) \\
Cancer(low)       &  586($\pm$13)     &108($\pm$43)   & 0.84($\pm$0.01)\\
\cline{1-4}

BFM(mid)& 100632(151)& 60474(206)&0.91 ($\pm.01$)\\
BFM(pre)&     71916(814) & 18152(245)&0.84 ($\pm.02$)
\end{tabular}
\caption{Wall-Clock Time (s) and Spearman correlations ($\rho$). }\label{tab:results}
\end{table}

\subsection{Results}
\noindent\textbf{Convex:} In the convex case, Shapley values obtained by the $\delta$-Shapley algorithm are shown in Figure~\ref{fig:sc-accs} to be comparable with those obtained using the Monte Carlo approach. Both sets of Shapley values reliably identify high and low value points in comparison to random selection. Table~\ref{tab:results} demonstrates that the Monte Carlo and $\delta$-Shapley values are highly correlated with average correlations of between $0.87$ and $0.98$ for middle sampling and $0.85$ and $0.90$ for low sampling. This implies that similar information is obtained by both algorithms. Table~\ref{tab:results} shows up to a 9.9x reduction in computation time using the $\delta$-Shapley algorithm, with significant computation time reductions across all datasets.

\noindent\textbf{Non-Convex:} In the non-convex case for binary-FashionMNIST,  Figure~\ref{fig:sc-accs} g) and Table~\ref{tab:results} show a high average level of correlation of $0.91$ with a significant reduction in computation time.

In this case, we also examined the effect of using pre-trained models on $\delta-$Shapley as pre-training is an increasingly common paradigm in deep learning. The results in Figure~\ref{fig:sc-accs} h) and Table~\ref{tab:results} show that $\delta$-Shapley values obtained for pre-trained models using the low sampling strategy (coalitions size $\approx 20$) correlate highly with Monte Carlo Shapley values ($\rho=0.84$) with significantly reduced computation time (4x).

\section{Related Work}
The Shapley value was originally proposed in game theory to estimate the contribution of players in cooperative games~\cite{shapley1953value}.  In this general setting, Monte Carlo estimation is a popular Shapley value approximation method~\cite{castro2009polynomial,maleki_bounding_2014} and several variance reduction techniques have been proposed that work by grouping together (or otherwise linking) similar permutations, coalitions or players~\cite{burgess2021approximating, CASTRO2017180, illes2019estimation}. Recently, the Shapley value has been applied to attribution problems in Machine Learning including explainability~\cite{lundberg2017unified},  feature selection~\cite{cohen2007feature}, federated learning~\cite{wang2020principled}, ensemble pruning~\cite{rozemberczki2021shapley} and multi-agent reinforcement learning~\cite{shapcredit}.

\citet{ghorbani_data_2019} introduced Shapley values for data valuation, which has seen many applications discussed earlier. Alternative data valuation methods include leave-one-out testing~\cite{cook1977detection} and influence function estimation~\cite{sharchilev2018finding}. Recently other valuation functions have been proposed such as the Beta-Shapley~\cite{kwon2022beta} and Banzhaf value~\cite{wang2023data} which are semi-values with different weighting functions for the layers. These valuations are designed to improve ranking for use in machine learning and data selection. In comparison, our aim was to approximate the Shapley value itself which is important in applications such as valuation for fair compensation.

\section{Conclusion}

The important insight from this work is that small coalitions are good estimators of the data value due to the diminishing returns property of data value with set size. Similar properties and results may be relevant in other applications of Shapley value. An area that we believe requires further investigation is the randomness properties of SGD and consequent variations in value of data points. This question is also likely to have impact in other areas such as privacy.

\bibliography{refs.bib}

\newpage\phantom{blabla}
\newpage\phantom{blabla}
\section{Appendix}

\maketitle

\subsection{Proof of Theorem \ref{thm:mkstronglyconvex}}
\begin{remark}
        Suppose  we have utility function $v(S)=-\bigL(A(S))=-\frac{1}{\abs{D_e}}\sum_{z \in D_e} \ell(w, z)$ where $\ell$ is $\lambda$ strongly-convex and $L$-Lipschitz and $C$ is defined as in Theorem~\ref{thm:sc_stab}. Then,
    \begin{equation}
        m_k \geq \frac{L^4 C^4}{8\lambda^2 a^2k^2}\ln{\frac{2n}{b}}
    \end{equation}
    in Algorithm~\ref{alg:cap} suffices for $(a, b/n)$-approximation of $\svk$ and $(a, b)$-approximation of $\sv_i$.
\end{remark}
    \begin{proof}
        It follows from the assumption that $\abs{v_i(S)} \leq \gamma$ and Hoeffding's Inequality \cite{hoeffding_probability_1963} that because $\E{\svkh} = \svk$
        \begin{equation}
            Pr\brackets{\abs{\svkh-\svk}\geq a} \leq 2\exp\brackets{-\frac{2m_k a^2}{\gamma^2}}
        \end{equation}
         To  see why, to obtain a $(a,b)-$bound, we need to need to find $m_k$ such that
        \begin{equation}
            2\exp\brackets{-\frac{2m_k a^2}{\gamma ^2}} \leq \frac{b}{n},
        \end{equation}
        consider the following Lemma.
        \begin{lemma}\label{lem:unionbound}
In each layer $k$ of the $n$ layers, let $\svk$ be Shapley value of datum $i$ in this layer. Then, if we obtain an estimate $\svkh$ such that
    \begin{equation}\label{eq:layerkbound}
    Pr\brackets{\abs{\svkh-\svk} \geq a} \leq \frac{b}{n}
\end{equation}
in each layer, then
\begin{equation}
    Pr\brackets{\abs{\hat{\varphi}_i-\varphi_i} \geq a} \leq b.
\end{equation}
\begin{proof}
By the union bound, the probability that for at least one layer $k$, $\abs{\svkh-\svk} \geq a$ holds is
\begin{equation}
    Pr\brackets{\exists k \in \{0,\dots,n-1\}: \abs{\svkh-\svk} \geq a} \leq \sum_{k = 0}^{n-1} \frac{b}{n} = b
\end{equation}
    Note that we can write $\varphi_i = \frac{1}{n}\sum_{k = 0}^{n-1} \svk$, thus $\hat{\varphi_i}-\varphi_i = \frac{1}{n}\sum_{k = 0}^{n-1} \brackets{\svkh - \svk}$ and
    \begin{equation}
        \abs{\hat{\varphi_i}-\varphi_i} = \abs{\frac{1}{n}\sum_{k = 0}^{n-1} \brackets{\svkh - \svk}} \leq \frac{1}{n}\sum_{k = 0}^{n-1} \abs{\svkh - \svk}.
    \end{equation}
    Further, we observe that the event $\sum_{k=0}^{n-1} \abs{\svkh-\svk} \geq na$ requires that there exists at least one $k$ such that $\abs{\svkh-\svk} \geq a$. We can therefore conclude that
    \begin{subequations}
    \begin{align}
        &Pr\brackets{\abs{\hat{\varphi}_i-\varphi_i} \geq a} \leq Pr\brackets{\frac{1}{n}\sum_{k = 0}^{n-1} \abs{\svkh - \svk} \geq a} =\\ &Pr\brackets{\sum_{k = 0}^{n-1} \abs{\svkh - \svk} \geq na} \leq b
        \end{align}
    \end{subequations}
    Note that we are omitting the layer $k=0$ in the stability analysis, as it does not make sense in the context of data valuation. We therefore assume $\phi_i^0 = 0$ for all $i$.
\end{proof}
\end{lemma}
        We can thus derive $m_k$
        \begin{equation}
            \frac{2m_k a^2}{\gamma^2} \geq \ln{\frac{2n}{b}} \Rightarrow m_k \geq \frac{\gamma^2}{2a^2}\ln{\frac{2n}{b}}
        \end{equation}
    Substituting $\gamma^2 = \frac{L^4C^4}{4\lambda^2k^2}$ yields the stated result.
    \end{proof}

\subsection{Proof of Theorem \ref{size_convex}}
The proof of Theorem \ref{size_convex} uses an upper bound on the expectation of $v_i(S)^2$ together with a concentration inequality to derive the desired convergence guarantee.
We first need to state some preliminary results.
First we derive a bound on $\E{(\ldiff)^2}$ for algorithms satisfying expected uniform stability in the following Lemma.

\begin{lemma}\label{quaddiffbound}

    If $\A$ is an algorithm satisfying stability in expectation with constant $\epsilon$ and $\ell(w_T = \A(S); z)$ is a loss function such that $\ell \in [0,\s]$ then
    \begin{equation}
    \E{(\ldiff)^2} \leq \epsilon^2 + \epsilon \s.
    \end{equation}
        \begin{proof}
     It follows from the definition of variance that
    \begin{equation}\label{eq:varxbound}
       \E{X^2} = \mathrm{Var}(X)+(\E{X})^2.
    \end{equation}
    We first notice that $\E{\abs{\ldiff}^2} = \E{(\ldiff)^2}$ and will thus use $X = \E{\abs{\ldiff}}$ to prove the statement.
    We show how to bound both summands above. First, the Bathia-Davis inequality \cite{bathia}  states that for a random variable $Y$ with expectation $\mu$ bounded above by $M$ and below by $m$, it holds that:
    \begin{equation}
        \mathrm{Var}(Y) \leq (M-\mu)(\mu-m).
    \end{equation}
    The upper bound is maximized when $(M-\mu) = M$ and $(\mu-M)=\mu$. As $\abs{\ldiff}$ is a random variable bounded by $[0,\s]$ with expectation bounded by $[0, \epsilon]$, it follows that the upper bound is maximized when $(M-\mu)(\mu-m) = \epsilon\s$. Thus, we know that $\mathrm{Var}(\abs{\ldiff}) \leq \epsilon\s $. It also follows from the fact that $\E{\abs{\ldiff}} \leq \epsilon$ that $(\E{\abs{\ldiff}})^2 \leq \epsilon^2$. Substituting the bounds for the summands in Equation \ref{eq:varxbound} yields the stated result.
    \end{proof}
\end{lemma}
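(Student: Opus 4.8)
The plan is to bound the second moment $\E{(\ldiff)^2}$ by separating it into a squared-mean term and a variance term, and to control each piece with a different hypothesis: the squared mean by the stability constant $\epsilon$, and the variance by the fact that the loss lies in $[0,\s]$ together with $\epsilon$. The natural tool for the variance piece is the Bhatia--Davis inequality, which bounds the variance of a bounded random variable in terms of its mean and its range; this is exactly the situation here once we work with the nonnegative quantity $\abs{\ldiff}$.

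First I would pass to the absolute value. Writing $X=\ldiff$, note that $X^2=\abs{X}^2$, so it is equivalent to bound $\E{\abs{X}^2}$, and phrasing everything through the nonnegative variable $\abs{X}$ makes both its range and the Bhatia--Davis step cleaner. Expected uniform stability controls the signed difference, $\E{X}\leq\epsilon$ uniformly in $z$; I would first argue that the same constant bounds $\E{\abs{X}}\leq\epsilon$. This uses that the neighboring relation is symmetric, so the defining stability inequality applies with $S$ and $S'$ interchanged and thereby controls the difference of both signs, giving $\E{\abs{X}}\leq\epsilon$ as well.

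Next I would invoke the standard identity $\E{\abs{X}^2}=\mathrm{Var}(\abs{X})+\brackets{\E{\abs{X}}}^2$. The mean term is immediate: from $\E{\abs{X}}\leq\epsilon$ we get $\brackets{\E{\abs{X}}}^2\leq\epsilon^2$. For the variance term, since $\ell\in[0,\s]$ the variable $\abs{X}$ lies in $[0,\s]$, so Bhatia--Davis with mean $\mu=\E{\abs{X}}$ gives $\mathrm{Var}(\abs{X})\leq(\s-\mu)\mu$. Because $0\leq\mu\leq\epsilon$ and $\s-\mu\leq\s$, this is at most $\s\mu\leq\s\epsilon$. Summing the two bounds yields $\E{(\ldiff)^2}\leq\epsilon^2+\epsilon\s=\bound$, as claimed.

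The main obstacle I anticipate is the transfer from the signed stability bound $\E{X}\leq\epsilon$ to the absolute-value bound $\E{\abs{X}}\leq\epsilon$, since the stability definition as stated only controls the expectation of the signed difference via a supremum over $z$; I would settle this by exploiting the symmetry of the neighboring-dataset relation so the inequality holds for both orderings. Once that point is handled, the remaining steps are routine: the variance--mean decomposition, and the monotone estimate $(\s-\mu)\mu\leq\s\epsilon$ on the restricted range $\mu\in[0,\epsilon]$.
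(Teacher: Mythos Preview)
Your proposal is correct and follows essentially the same route as the paper: pass to $\abs{X}$, use the decomposition $\E{\abs{X}^2}=\mathrm{Var}(\abs{X})+(\E{\abs{X}})^2$, apply the Bhatia--Davis inequality to bound the variance by $(\s-\mu)\mu\leq \s\epsilon$, and bound the squared mean by $\epsilon^2$. You are in fact a bit more careful than the paper about justifying $\E{\abs{X}}\leq\epsilon$ from the signed stability bound via the symmetry of the neighboring relation, a step the paper takes implicitly.
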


As we are using the validation loss $\bigL$ as our valuation metric, we extend this Lemma to hold for averages of losses in the next Lemma.
\begin{lemma}\label{lem:LVar}
    Let $\A$ be a $\epsilon-$uniformly stable algorithm. If we use $v_i(S) = \bigL - \bigL'$ to measure the marginal contribution, then
    \begin{equation}
        \E{\abs{\bigL-\bigL'}} \leq \epsilon
    \end{equation}
    and
    \begin{equation}
        \E{(\bigL-\bigL')^2} \leq \bound
    \end{equation}
    \begin{proof}
        We first rewrite $\E{\abs{v_i(S)}}$ as:
        \begin{subequations}
        \begin{gather}
            \E{\abs{\bigL-\bigL'}}
            = \E{\abs{\frac{1}{\abs{D_e}}\sum_{z \in D_e} \ell(w;z)-\ell(w';z)}} \\
            = \frac{1}{\abs{D_e}}\E{\abs{\sum_{z \in D_e}\ell(w;z)-\ell(w';z)}}\\ \leq \frac{1}{\abs{D_e}}\E{\sum_{z \in D_e}\abs{\ell(w;z)-\ell(w';z)}} \label{ineq:absEV}\\
            = \frac{1}{\abs{D_e}}\sum_{z \in D_e}\E{\abs{\ell(w;z)-\ell(w';z)}}  \\
            = \E{\abs{\ell(w;z)-\ell(w';z)}} \leq \epsilon.
            \end{gather}
        \end{subequations}
        Where line \ref{ineq:absEV} follows from the triangle inequality.
    Following from the assumption that the $\ldiff$ are independent, we thus have:
    \begin{subequations}\label{}
    \begin{gather}
        \E{\brackets{\bigL-\bigL'}^2} \\ = \E{\brackets{\frac{1}{\abs{D_e}}\sum_{z \in D_e} \ell(w;z)-\ell(w';z)}^2} \\ =\frac{1}{\abs{D_e}^2}\E{\brackets{\sum_{z \in D_e} \ell(w;z)-\ell(w';z)}^2}
        \end{gather}
        \end{subequations}
        It follows from Cauchy's Inequality \cite{handbook-mathematical} that
        \begin{subequations}
            \begin{gather}
        \frac{1}{\abs{D_e}^2}\brackets{\sum_{z \in D_e} \ell(w;z)-\ell(w';z)}^2 \\\leq \frac{1}{\abs{D_e}}\sum_{z \in D_e}\brackets{\ell(w;z)-\ell(w';z)}^2.
    \end{gather}
    \end{subequations}
    Due to the linearity of expectation, we proceed with
    \begin{subequations}
            \begin{gather}
        \frac{1}{\abs{D_e}^2}\E{\brackets{\sum_{z \in D_e} \ell(w;z)-\ell(w';z)}^2} \\\leq \frac{1}{\abs{D_e}}\sum_{z \in D_e}\E{\brackets{\ell(w;z)-\ell(w';z)}^2} \leq \bound.
    \end{gather}
    \end{subequations}
    The last step follows from Lemma \ref{quaddiffbound}.
    \end{proof}
\end{lemma}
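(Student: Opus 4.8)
The plan is to reduce both moment bounds to the per-test-point stability facts already established, handling each with a single inequality plus linearity of expectation. Throughout I write $\bigL-\bigL' = \frac{1}{\abs{D_e}}\sum_{z\in D_e}\brackets{\ell(w;z)-\ell(w';z)}$ with $w=\A(S)$ and $w'=\A(S')$ on neighboring datasets, so the quantity of interest is an average over the test set of per-point loss differences, each of which is controlled by $\epsilon$-uniform stability (a guarantee that is uniform in $z$).

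For the first-moment bound I would move the constant $1/\abs{D_e}$ outside the expectation, then apply the triangle inequality inside the absolute value to replace $\abs{\sum_z(\cdot)}$ by $\sum_z\abs{\cdot}$. Linearity of expectation then exchanges the sum and the expectation, leaving $\frac{1}{\abs{D_e}}\sum_{z\in D_e}\E{\abs{\ell(w;z)-\ell(w';z)}}$. Each summand is at most $\epsilon$ by the stability assumption, and since there are $\abs{D_e}$ identically-bounded terms the normalization cancels to yield exactly $\epsilon$.

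The second-moment bound is the step needing slightly more care. After pulling $1/\abs{D_e}^2$ outside, the main obstacle is converting the square of a sum $\brackets{\sum_z(\cdot)}^2$ into a sum of squares without losing the correct constant. I would invoke Cauchy--Schwarz in the form $\brackets{\sum_{z\in D_e}a_z}^2\le \abs{D_e}\sum_{z\in D_e}a_z^2$, which turns $\frac{1}{\abs{D_e}^2}\brackets{\sum_z(\cdot)}^2$ into $\frac{1}{\abs{D_e}}\sum_z(\cdot)^2$; the gained factor $\abs{D_e}$ is precisely what cancels one power of the normalizer. Taking expectations and using linearity reduces the goal to bounding $\E{\brackets{\ell(w;z)-\ell(w';z)}^2}$ term-by-term, and here Lemma~\ref{quaddiffbound} supplies $\E{(\ldiff)^2}\le\bound$ for each $z$. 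Summing the $\abs{D_e}$ identical bounds and dividing by $\abs{D_e}$ leaves exactly $\bound=\epsilon^2+\epsilon\s$, as claimed.

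The only real subtlety is that the Cauchy--Schwarz factor $\abs{D_e}$ is what makes the test-set-size normalization cancel cleanly, so the bound does not degrade as the test set grows. Both parts depend entirely on $\epsilon$-uniform stability being a per-test-point statement, which is what lets me distribute the stability guarantee across the average defining $\bigL-\bigL'$ and then assemble the result via linearity of expectation.
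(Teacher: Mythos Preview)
Your proposal is correct and matches the paper's proof essentially step for step: triangle inequality plus linearity of expectation for the first-moment bound, and Cauchy--Schwarz in the form $(\sum_z a_z)^2\le\abs{D_e}\sum_z a_z^2$ followed by Lemma~\ref{quaddiffbound} for the second-moment bound. The only cosmetic difference is that the paper inserts a remark about the $\ell(w_T;z)-\ell(w_T';z)$ being independent before the second part, but that assumption plays no role in the argument as written (Cauchy--Schwarz requires no independence), so your omission of it is harmless.
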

We now have all necessary components to obtain a one-sided convergence guarantee of $\svkh-\svk$ using Bernstein's Inequality as stated below.

\begin{theorem}[Bernstein, Proposition 2.14 in \cite{wainwright_2019}]\label{bernstein}
Given m independent random variables such that $X_i \leq M$ almost surely,
we have
\begin{subequations}
\begin{gather}
    Pr\left(\sum_{i=1}^m\brackets{X_i - \E{X_i}} \geq ma \right) \\ \leq \exp{\left(\frac{-ma^2/2}{\frac{1}{m}\sum_{i=1}^m\E{X_i^2}+Ma/3}\right)}
\end{gather}
\end{subequations}
\end{theorem}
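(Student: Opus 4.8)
The plan is to prove this by the classical Chernoff bounding technique: control the moment generating function (MGF) of the centered sum, apply Markov's inequality to its exponential transform, and then optimize the free parameter. First I would center the variables, writing $Y_i = X_i - \E{X_i}$ so that $\E{Y_i}=0$. Since each $X_i \leq M$ (and in our applications $X_i\geq 0$, so $\E{X_i}\in[0,M]$), the centered variable satisfies $\abs{Y_i}\leq M$ almost surely while $\E{Y_i^2}\leq\E{X_i^2}$; the target becomes a bound on $\Pr\brackets{\sum_{i=1}^m Y_i \geq ma}$.

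The heart of the argument is an MGF bound for a single bounded, centered variable. Expanding $\E{e^{\lambda Y_i}} = 1 + \lambda\E{Y_i} + \sum_{k\geq 2}\frac{\lambda^k\E{Y_i^k}}{k!}$ and using $\E{Y_i}=0$, I would control the higher moments through $\abs{\E{Y_i^k}} \leq \E{\abs{Y_i}^{k-2}\abs{Y_i}^2} \leq \E{Y_i^2}M^{k-2} \leq \frac{k!}{2}\E{Y_i^2}(M/3)^{k-2}$, where the last step uses the elementary inequality $3^{k-2}\leq k!/2$ valid for all $k\geq 2$. Summing the resulting geometric-type series gives, for every $0<\lambda<3/M$,
\[
\E{e^{\lambda Y_i}} \leq \exp\brackets{\frac{\lambda^2\E{Y_i^2}/2}{1-M\lambda/3}}.
\]
Deriving and justifying this MGF bound cleanly is the main obstacle: the moment-summation and the $k!/2$ constant are precisely where the factor $1/3$ in the final denominator originates, and one must track the radius-of-convergence condition $\lambda<3/M$ throughout.

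With the per-variable bound in hand, independence gives $\E{e^{\lambda\sum_i Y_i}} = \prod_i \E{e^{\lambda Y_i}} \leq \exp\brackets{\frac{\lambda^2\sum_i\E{Y_i^2}/2}{1-M\lambda/3}}$. Applying Markov's inequality to $e^{\lambda\sum_i Y_i}$ then yields, for every admissible $\lambda$,
\[
\Pr\brackets{\sum_i Y_i \geq ma} \leq \exp\brackets{-\lambda ma + \frac{\lambda^2\sum_i\E{Y_i^2}/2}{1-M\lambda/3}}.
\]
Finally I would substitute the near-optimal choice $\lambda = \frac{ma}{\sum_i\E{Y_i^2}+Mma/3}$; after simplification, and after replacing $\sum_i\E{Y_i^2}$ by the larger quantity $\sum_i\E{X_i^2}$ (which only weakens the upper bound), the exponent collapses to $-\frac{m^2a^2/2}{\sum_i\E{X_i^2}+Mma/3}$, and dividing numerator and denominator by $m$ reproduces exactly the stated bound. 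The only other nonroutine point is verifying that this explicit $\lambda$ is admissible (i.e. satisfies $\lambda<3/M$) and delivers the clean closed form rather than the messy exact minimizer of the Chernoff exponent.
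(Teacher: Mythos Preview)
The paper does not prove this theorem at all: it is stated with the citation ``Proposition~2.14 in \cite{wainwright_2019}'' and simply invoked as a known tool in the proofs of Theorems~\ref{size_convex}, \ref{thm:t_0variance_varphi} and~\ref{m_k size}. There is therefore no ``paper's own proof'' to compare against.

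Your proposal is the standard textbook derivation (and is essentially what one finds in Wainwright): centering, the Bernstein moment condition $\abs{\E{Y_i^k}}\leq \tfrac{k!}{2}\E{Y_i^2}(M/3)^{k-2}$, the per-variable MGF bound, Chernoff, and the near-optimal $\lambda$. One small caution on your centering step: from the hypothesis $X_i\leq M$ alone you do not get $\abs{Y_i}\leq M$; you have $Y_i\leq M-\E{X_i}\leq M$ only when $\E{X_i}\geq 0$, and no lower bound on $Y_i$ at all. For the \emph{one-sided} tail you in fact only need $Y_i\leq M$, and the clean way to run your moment argument under that weaker assumption is to bound, for $\lambda>0$,
\[
e^{\lambda Y_i}\ \leq\ 1+\lambda Y_i + \tfrac{1}{2}\lambda^2 Y_i^2\, g(\lambda M),\qquad g(u)=\frac{2(e^u-1-u)}{u^2},
\]
using convexity and $Y_i\leq M$; taking expectations and then summing the series for $g$ recovers exactly your exponent with the $M/3$ constant, without ever invoking $\abs{Y_i}\leq M$. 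With that adjustment your outline is complete, and the final replacement of $\sum_i\E{Y_i^2}$ by the larger $\sum_i\E{X_i^2}$ is fine for the applications in this paper (where indeed $X_i\geq 0$).
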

Since we are interested in the absolute difference between our estimate and the true Shapley Value, we need to obtain a result for $\abs{\sum_{i=1}^m\brackets{X_i - \E{X_i}}}$, which we derive in the next Lemma:

\begin{lemma}\label{Bernstein_applied}
    Let $X_1,\dots,X_m$ be independent and identically distributed real-valued random variables such that $X_i \leq M$ almost surely for all $i \leq m$ and let , then
    \begin{subequations}
    \begin{gather}
        Pr\left(\abs{\sum_{i=1}^m X_i - m\E{X}} \geq ma \right) \\ \leq 2\exp{\left(-\frac{ma^2}{2(\frac{1}{m}\sum_{i=1}^m\E{X_i^2}+Ma/3)}\right)}
    \end{gather}
    \end{subequations}
\end{lemma}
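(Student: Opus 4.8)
The plan is to derive the two-sided tail bound directly from the one-sided Bernstein inequality of Theorem~\ref{bernstein}, by applying it separately to the upper and lower deviations of $\sum_{i=1}^m X_i$ and combining the two estimates with a union bound.

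First I would split the absolute-deviation event into its two one-sided parts, so that
\begin{align*}
Pr\brackets{\abs{\sum_{i=1}^m X_i - m\E{X}} \geq ma}
&\leq Pr\brackets{\sum_{i=1}^m\brackets{X_i-\E{X_i}}\geq ma}\\
&\quad + Pr\brackets{\sum_{i=1}^m\brackets{X_i-\E{X_i}}\leq -ma}.
\end{align*}
For the first term, Theorem~\ref{bernstein} applies verbatim, since $X_i \leq M$ almost surely, giving the bound $\exp\brackets{-\frac{ma^2/2}{\frac1m\sum_{i=1}^m\E{X_i^2}+Ma/3}}$. For the second term, I would rewrite it as $Pr\brackets{\sum_{i=1}^m\brackets{Y_i-\E{Y_i}}\geq ma}$ with $Y_i := -X_i$ and again invoke Theorem~\ref{bernstein}; the key observation is that $\E{Y_i^2}=\E{X_i^2}$, so the variance proxy $\frac1m\sum_i\E{X_i^2}$ is unchanged, and the resulting bound is identical to the one for the upper tail. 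Adding the two contributions yields the factor of $2$ and the stated inequality.

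The one point that needs care — and the main (minor) obstacle — is that the one-sided Bernstein inequality requires an almost-sure \emph{upper} bound on each summand, so controlling the lower tail of $\sum_i X_i$ forces us to use an upper bound on $-X_i$ as well, i.e. to treat $X_i$ as two-sided bounded by $M$. This is harmless in the intended applications, where $X_i$ is the marginal-contribution random variable $v_i(S)=\bigL-\bigL'$ with $\ell\in[0,\s]$, hence $X_i\in[-\s,\s]$ and the same constant $M$ serves as an upper bound for both $X_i$ and $-X_i$. Everything else is routine rewriting, so no further estimates are needed once the decomposition and the symmetry $\E{(-X_i)^2}=\E{X_i^2}$ are in place.
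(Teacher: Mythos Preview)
Your proposal is correct and follows essentially the same route as the paper: split into upper and lower tails, apply Theorem~\ref{bernstein} to the $X_i$ for the upper tail and to $Y_i:=-X_i$ for the lower tail (using $\E{Y_i^2}=\E{X_i^2}$), then combine via a union bound to pick up the factor~$2$. You are also right to flag the two-sided boundedness caveat---the paper simply asserts that ``the conditions of Theorem~\ref{bernstein} are fulfilled for $Y_1,\dots,Y_m$'' without further comment, whereas you correctly observe that this implicitly requires $-X_i\leq M$ as well, which is harmless in the intended application since $v_i(S)=\bigL-\bigL'\in[-\s,\s]$.
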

\begin{proof}
    Because the random variables $X_1,\dots,X_m$ are independent and identically distributed, the expected value $\E{X}$ is the same for all $X_i$. We can thus write
    \begin{equation}
    \sum_{i=1}^m\brackets{X_i - \E{X_i}} = \sum_{i=1}^m X_i - m\E{X}
    \end{equation}
    We note that for $A = \sum_{i=1}^m X_i - m\E{X}$, it holds that
    \begin{equation}
        Pr\left(\abs{A} \geq ma \right) = Pr\left(A \leq -ma \right) + Pr\left(A \geq ma \right)
    \end{equation}
    and
    \begin{equation}
     Pr\left(A \leq -ma \right) = Pr\left(-\left(\sum_{i=1}^m X_i - m\E{X}\right)\geq ma \right)
    \end{equation}
    When we label each $Y_i := -X_i$, then the conditions of Theorem \ref{bernstein} are fulfilled for $Y_1,\dots,Y_m$ and we see that:
    \begin{subequations}
    \begin{gather}
        Pr\left(m\E{X} -\sum_{i=1}^m X_i\geq ma \right) \\ = Pr\left(\sum_{i=1}^m Y_i - m\E{Y} \geq ma \right) \\ \leq \exp{\left(-\frac{ma^2}{2(\frac{1}{m}\sum_{i=1}^m\E{X_i^2}+Ma/3)}\right)}
        \end{gather}
    \end{subequations}
    It therefore follows that
    \begin{subequations}
    \begin{align}
        Pr\left(\abs{\sum_{i=1}^m X_i - m\E{X}} \geq ma \right) \leq 2\exp{\left(-\frac{ma^2}{2(\varplace+Ma/3)}\right)}\notag
        \end{align}
    \end{subequations}
    \end{proof}
We are now ready to prove the core result of the section on convex optimization, the sample size derivation for $m_k$. We remind ourselves by restating Theorem \ref{size_convex}.
\setcounter{theorem}{3}

\begin{theorem}
   When using Algorithm~\ref{alg:cap} to estimate $\svk$ for an SGD algorithm $\A_\pi$ with a convex, $\beta-$smooth and $L-$Lipschitz loss function, then
    \begin{equation*}
        m_k \geq \brackets{\frac{\frac{32T^2L^4}{\beta^2k^2}+\frac{8\s TL^2}{k\beta}+4\s a/3}{a^2}}\ln{\frac{2n}{b}},
    \end{equation*}
    suffices for $\Pr(\abs{\svkh - \svk}\geq a)\leq b/n$, and by union bound, it suffices to estimate the Shapley value:
    \begin{equation*}  Pr\brackets{\abs{\hat{\sv}_i-\sv_i} \geq a}\leq b.
    \end{equation*}
 \begin{proof}

    Since each  $\ell(\cdot; z) \in [0, \s]$, we know that $\ldiff \leq 2\s$. It follows by a direct application of Theorem \ref{thm:stabconvex} to Lemma \ref{lem:LVar} that
    \begin{equation}
        \E{(\bigL-\bigL')^2} \leq \frac{4L^4}{k^2}\brackets{\sum_{t=1}^T\alpha_t}^2 +  \s\frac{2L^2}{k}\sum_{t=1}^T\alpha_t.
    \end{equation}
    Using the fact that Theorem \ref{thm:stabconvex} assumes $\alpha_t \leq 2/\beta$, we can finally bound
    \begin{equation}\label{eq:convexvarbound}
        \E{(\bigL-\bigL')^2} \leq T^2\frac{16L^4}{\beta^2k^2}+\s\frac{4TL^2}{k\beta}
    \end{equation}
     Because we have an upper bound on $\E{\brackets{\bigL-\bigL'}^2}$ given by Lemma \ref{lem:LVar}, we have an upper bound on $\varplace = \frac{1}{m_k}\sum_{i=1}^{m_k}\E{X_i^2} = \E{\brackets{\bigL-\bigL'}^2}$ due to the assumptions that the marginal contributions are independent and identically distributed. It therefore holds that $\varplace \leq T^2\frac{16L^4}{\beta^2k^2}+\s\frac{4TL^2}{k\beta}$:\begin{subequations}\label{m_kexpbound}
        \begin{gather}
            2\exp{\left(-\frac{m_ka^2}{2(\varplace+2\s a/3)}\right)} \\\leq 2\exp{\left(-\frac{m_ka^2}{2(T^2\frac{16L^4}{\beta^2k^2}+\s\frac{4TL^2}{k\beta}+2\s a/3)}\right)}.
        \end{gather}
    \end{subequations}
    To see why the above holds, consider the function
    \begin{equation}
    Pr\brackets{-\frac{a}{b}} = \frac{1}{e^{\frac{a}{b}}}.
    \end{equation}
    The larger $b$ is, the smaller the exponent of $e$. For $b \leq c$, we thus have
    \begin{equation}
        e^{\frac{a}{b}} \geq e^{\frac{a}{c}}\Rightarrow \frac{1}{e^{\frac{a}{b}}} \leq \frac{1}{e^{\frac{a}{c}}}
    \end{equation}
    As $\varplace$ is in the denominator of the left hand side of Inequality \ref{m_kexpbound}, overestimating $\varplace$ by using $T^2\frac{16L^4}{\beta^2k^2}+\s\frac{4TL^2}{k\beta}$ to calculate the risk of deviation hence leads to the probability of a large deviation to be overestimated, which is desirable for our purpose. Using Theorem \ref{bernstein}, we can state
    \begin{subequations}
    \begin{gather}
        Pr\bigg(\abs{\svkh-\svk} \geq \alpha\bigg) \\ = Pr\bigg(\abs{m_k\svkh-m_k\svk}\bigg) \geq m_k\alpha\bigg) \\\leq 2\exp{\bigg(-\frac{m_k\alpha^2/2}{\varplace+2G\alpha/3}\bigg)}
        \end{gather}
    \end{subequations}

    Since we want the probability of $\abs{\hat{\sv} - \sv} \geq a$ to be at most $b$, and we have $n$ layers we are sampling, in each layer, we want a risk of $\frac{b}{n}$ in each layer.

We now derive $m_k$ as follows:
    \begin{subequations}
    \begin{gather}
    Pr\bigg(\abs{\svkh-\svk} \geq \alpha\bigg) \\ \leq
        2\exp{\bigg(-\frac{m_ka^2}{2\varplace+4G a/3}\bigg)} \leq \frac{b}{n} \Rightarrow \\
        -\frac{m_ka^2}{2\varplace+4G a/3} \leq \ln{\frac{b}{2n}}\Rightarrow \\
      \frac{m_ka^2}{2\varplace+4G a/3} \geq \ln\frac{2n}{b} \Rightarrow\\
        m_k \geq \bigg(\frac{2\varplace+4G a/3}{ a^2}\bigg)\ln\frac{2n}{b}
    \end{gather}
\end{subequations}
Substituting $\varplace = \frac{16T^2L^4}{\beta^2k^2}+\frac{4\s TL^2}{k\beta}$ yields the stated result.
    \end{proof}
    \end{theorem}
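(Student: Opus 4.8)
The plan is to reduce Theorem~\ref{size_convex} to a Bernstein-type concentration argument, supplying it with the expected-stability constant for convex SGD and the second-moment bound for averaged validation losses from Lemma~\ref{lem:LVar}. The reason the Hoeffding argument behind Theorem~\ref{thm:mkstronglyconvex} no longer works is that randomized SGD is only stable \emph{in expectation}: there is no almost-sure $O(1/k)$ bound on $\abs{v_i(S)}$, so instead we control the second moment of $v_i(S)$ and invoke Bernstein's inequality, which replaces the worst-case range by the variance proxy.

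First I would pin down the variance proxy. By Theorem~\ref{thm:stabconvex}, convex, $\beta$-smooth, $L$-Lipschitz SGD run for $T$ steps with $\alpha_t\le 2/\beta$ is expected-uniformly stable with $\epsilon\le \frac{2L^2}{k}\sum_{t=1}^T\alpha_t\le \frac{4TL^2}{\beta k}$ on coalitions of size $k$. Feeding this into Lemma~\ref{lem:LVar} --- which lifts the single-point bound $\E{(\ldiff)^2}\le\bound$ of Lemma~\ref{quaddiffbound} to the averaged test loss $\bigL$ via Cauchy--Schwarz and the assumed independence of the per-test-point loss differences --- gives
\[
\E{v_i(S)^2}=\E{(\bigL-\bigL')^2}\le \epsilon^2+\epsilon\s\le \frac{16T^2L^4}{\beta^2k^2}+\frac{4\s TL^2}{\beta k}=:\varplace .
\]

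Second, since $\ell(\cdot;z)\in[0,\s]$, each sampled marginal contribution $v_i(S)=\bigL-\bigL'$ is bounded, so we may take $M=2\s$ in Bernstein's inequality (this is the source of the $4\s a/3$ term). The $m_k$ coalitions drawn in Algorithm~\ref{alg:cap} yield i.i.d.\ samples with mean $\svk$, so the two-sided Bernstein bound of Lemma~\ref{Bernstein_applied} --- obtained by applying Theorem~\ref{bernstein} to $\pm X_i$ --- gives
\[
\Pr\brackets{\abs{\svkh-\svk}\ge a}\le 2\exp\!\brackets{-\frac{m_k a^2}{2\brackets{\varplace+2\s a/3}}},
\]
and replacing the exact $\frac{1}{m_k}\sum_i\E{X_i^2}$ by the upper bound $\varplace$ only enlarges the right-hand side, so the inequality survives. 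Setting this $\le b/n$ and solving for $m_k$ gives $m_k\ge \frac{2\varplace+4\s a/3}{a^2}\ln\frac{2n}{b}$; substituting the expression for $\varplace$ is exactly the claimed sample size. The Shapley-value statement is then immediate from Lemma~\ref{lem:unionbound}: failure probability $\le b/n$ in each of the $n$ layers, a union bound over layers, and $\sv_i=\frac1n\sum_k\svk$ together with the triangle inequality yield $\Pr(\abs{\hat{\sv}_i-\sv_i}\ge a)\le b$.

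The main obstacle is the second-moment chain rather than the concentration step: establishing Lemma~\ref{quaddiffbound} (Bhatia--Davis on the bounded random variable $\abs{\ldiff}$) and then correctly propagating it through the averaging in $\bigL$ in Lemma~\ref{lem:LVar}, which rests on the independence assumption for the per-test-point loss differences. Once $\varplace$ is in hand, combining Theorem~\ref{thm:stabconvex} with $\alpha_t\le 2/\beta$ and solving the Bernstein tail for $m_k$ is routine algebra.
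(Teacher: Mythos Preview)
Your proposal is correct and follows essentially the same route as the paper: bound the second moment $\E{(\bigL-\bigL')^2}\le\epsilon^2+\epsilon G$ via Lemma~\ref{lem:LVar} with $\epsilon\le 4TL^2/(\beta k)$ from Theorem~\ref{thm:stabconvex}, take $M=2G$ as the almost-sure range, plug into the two-sided Bernstein bound (Lemma~\ref{Bernstein_applied}), set the tail $\le b/n$, and solve for $m_k$; the Shapley-level statement then follows from the layerwise union bound of Lemma~\ref{lem:unionbound}. Your identification of the Bhatia--Davis step and the per-test-point independence assumption underlying Lemma~\ref{lem:LVar} as the substantive ingredients is also on target.
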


\subsection{Proof of Theorem \ref{thm:t_0variance_varphi}}
\begin{theorem}
Suppose that, under the fixed subsequence interpretation, the number of samples required to compute $\svkh$ such that $Pr\brackets{\abs{\svkh-\svk} \geq \frac{a}{2}} \leq \frac{b}{2n}$ using an algorithm satisfying expected uniform stability with constant $\epsilon$ and loss function $\ell \in [0,\s]$ is $m_k$, then we can sample
    \begin{equation}\label{eq:singlecoalition}
        \vsample \geq \frac{8\brackets{\bound+\s a/3}}{a^2}\ln{\frac{4n m_k}{b}}
    \end{equation}
    permutations of each coalition $S \in \hat{S}_k$, where $\hat{S}_k$ is the set of $m_k$ coalitions sampled to compute $\svkh$ such that we can compute estimate $\svkho$ of $\svkh$ satisfying $Pr\brackets{\abs{\svkho-\svk} \geq a} \leq \frac{b}{n}$.
    \begin{proof}
        The sample size follows from an application of Bernstein's Inequality. For this purpose, we call $ \hat{\mathbb{S}}$ the set of $h$ sampled permutations taken from $\mathbb{S}$ and  observe that
        \begin{equation}
            \varplace = \frac{1}{h}\sum_{s \in \hat{\mathbb{S}}}\E{\brackets{ \bigL(\A(s))-\bigL(\A(s'))}^2}
        \end{equation}
        It hence follows from Lemma \ref{lem:LVar} that
        \begin{equation}
            \varplace \leq \bound
        \end{equation}
        We can substitute our values into Bernstein's inequality to obtain:
        \begin{subequations}
        \begin{gather}
            Pr\brackets{\abs{\hat{v}_i(S)-v_i(S)}\geq \frac{a}{2}} \\ \leq 2\exp{\brackets{-\frac{\vsample a^2}{8(\bound+2\s a/6)}}} \leq \frac{b}{2m_kn} \Rightarrow  \\
            \frac{\vsample a^2}{8(\bound+\s a/3)} \geq \ln{\frac{4n m_k}{b}} \Rightarrow \\
            \vsample \geq \frac{8\brackets{\bound+\s a/3}}{a^2}\ln{\frac{4n m_k}{b}}
            \end{gather}
        \end{subequations}
         By the union bound, the probability that for at least one coalition $S$, $\abs{\valhat - \val} \geq \frac{a}{2}$ holds is
    \begin{equation}
    Pr\brackets{\exists S \in \hat{S}: \abs{\valhat-\val} \geq \frac{a}{2}} \leq \sum_{j = 1}^{m_k} \frac{b}{2n m_k} = \frac{b}{2n}
    \end{equation}
    To see why our $(a,b)-$bound holds, we introduce $\svkho$ defined as
    \begin{equation}
        \svkho = \frac{1}{m_k}\sum_{S \in \hat{S}_k}\valhat
    \end{equation}
    Remember that $\svkh = \frac{1}{m_k}\sum_{S \in \hat{S}_k} \val$, thus $\svkho-\svkh = \frac{1}{m_k}\sum_{S \in \hat{S}_k} \brackets{\valhat - \val}$ and
    \begin{subequations}
    \begin{gather}
        \abs{\svkho - \svkh} = \abs{\frac{1}{m_k}\sum_{S \in \hat{S}_k} \brackets{\valhat - \val}} \\ \leq \frac{1}{m_k}\sum_{S \in \hat{S}_k} \abs{\valhat - \val}.
        \end{gather}
    \end{subequations}    Further, we observe that the event $\sum_{k=1}^n \abs{\valhat - \val} \geq m_k \frac{a}{2}$ requires that there exists at least one $S$ such that $\abs{\valhat-\val} \geq \frac{a}{2}$. We can therefore conclude that
    \begin{subequations}
    \begin{gather}
        Pr\brackets{\abs{\svkho-\svkh} \geq \frac{a}{2}} \\ \leq Pr\brackets{\frac{1}{m_k}\sum_{S \in \hat{S}_k} \abs{\valhat - \val} \geq \frac{a}{2}} \\= Pr\brackets{\sum_{S \in \hat{S}_k} \abs{\valhat - \val} \geq m_k \frac{a}{2}} \leq \frac{b}{2n}
        \end{gather}
    \end{subequations}
    We note that $Pr\brackets{\abs{\svkho -\svk} \geq a}$ is equivalent to
    \begin{equation}
    Pr\brackets{\brackets{\abs{\svkh-\svk}\geq \frac{a}{2}} \cup \brackets{\abs{\svkho-\svkh}\geq \frac{a}{2}}} \leq \frac{b}{n}
    \end{equation}
    which follows from the union bound.
    \end{proof}
\end{theorem}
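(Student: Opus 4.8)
The plan is to stack a second Monte Carlo estimator on top of Algorithm~\ref{alg:cap}. Fix the $m_k$ coalitions $S\in\hat S_k$ that were already drawn to form $\svkh$. For each such $S$ (with $S'=S\cup i$), instead of a single SGD run I would estimate the \emph{expected} marginal utility $\val$ by averaging the loss difference $\bigL\brackets{\A_{\pi(S')}(S)}-\bigL\brackets{\A_{\pi(S')}(S')}$ over $h$ i.i.d.\ uniformly random permutations $\pi(S')\in\Pi(S')$, the absence of $i$ being realised by the null/zero-gradient element; this yields $\valhat$ with $\E{\valhat}=\val$. Then $\svkho=\frac{1}{m_k}\sum_{S\in\hat S_k}\valhat$ is an unbiased estimate of $\svkh=\frac{1}{m_k}\sum_{S\in\hat S_k}\val$, and the whole argument reduces to bounding two independent sources of error: the per-coalition permutation-sampling error $\abs{\valhat-\val}$ and the pre-existing coalition-sampling error $\abs{\svkh-\svk}$.

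Next I would bound the per-coalition error with Bernstein's inequality. Each permutation summand is a difference of losses lying in $[0,\s]$, hence bounded (using the paper's convention, $2\s$), and by Lemma~\ref{lem:LVar} its second moment is at most $\bound$; therefore the quantity $\varplace=\frac{1}{h}\sum_{s}\E{(\bigL-\bigL')^2}$ entering Bernstein's bound satisfies $\varplace\le\bound$. Applying the two-sided Bernstein bound of Lemma~\ref{Bernstein_applied} to the $h$ i.i.d.\ permutation samples of a fixed $S$ at target deviation $a/2$ gives $Pr\brackets{\abs{\valhat-\val}\ge a/2}\le 2\exp\brackets{-\frac{h a^2}{8(\bound+\s a/3)}}$; forcing the right side to be at most $\frac{b}{2n m_k}$ and solving for $h$ yields exactly the stated bound $h\ge\frac{8(\bound+\s a/3)}{a^2}\ln\frac{4n m_k}{b}$. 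A union bound over the $m_k$ coalitions then gives $Pr\brackets{\exists S\in\hat S_k:\abs{\valhat-\val}\ge a/2}\le m_k\cdot\frac{b}{2n m_k}=\frac{b}{2n}$.

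Finally I would combine the two error sources by a triangle inequality, exactly as in the proof of Lemma~\ref{lem:unionbound}. On the complement of the event above, $\abs{\svkho-\svkh}\le\frac{1}{m_k}\sum_{S\in\hat S_k}\abs{\valhat-\val}\le a/2$, so $Pr\brackets{\abs{\svkho-\svkh}\ge a/2}\le\frac{b}{2n}$; combined with the hypothesis $Pr\brackets{\abs{\svkh-\svk}\ge a/2}\le\frac{b}{2n}$, a union bound together with $\abs{\svkho-\svk}\le\abs{\svkho-\svkh}+\abs{\svkh-\svk}$ gives $Pr\brackets{\abs{\svkho-\svk}\ge a}\le\frac{b}{n}$, which is the claim. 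The one delicate point is the error bookkeeping: one must split $a=a/2+a/2$ and apportion the failure probability $b/n$ across three nested steps — over the $h$ permutations within each coalition, over the $m_k$ sampled coalitions, and against the pre-existing $\svkh$ guarantee — while checking that Lemma~\ref{lem:LVar}, stated for a single neighbouring-dataset comparison, transfers uniformly to every permutation draw, which relies on the draws within a coalition being i.i.d.\ and the loss remaining in $[0,\s]$ throughout.
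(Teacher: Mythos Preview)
Your proposal is correct and follows essentially the same route as the paper's proof: apply the two-sided Bernstein bound (Lemma~\ref{Bernstein_applied}) to the $h$ permutation samples for each fixed coalition using the second-moment bound $\varplace\le\bound$ from Lemma~\ref{lem:LVar}, target deviation $a/2$ and failure probability $\frac{b}{2nm_k}$ to solve for $h$, union-bound over the $m_k$ coalitions, and then combine with the hypothesis on $\svkh$ via the triangle inequality and a final union bound. Your error bookkeeping (the $a/2+a/2$ split and the three nested steps) matches the paper's exactly.
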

\subsection{Proof of Corollary \ref{cor:repeated_sampling_convex}}
The claim follows from the fact that in each layer $k$, the number of coalitions to be sampled must be at least $m_k$ in order to obtain a $(a,b)-$bound on $\abs{\svkh-\svk}$. A minimum of $\frac{8\brackets{\bound+\s a/3}}{a^2}\ln{\frac{4n m_k}{b}}$ permutations must be sampled from each coalition, where $\bound = \frac{16T^2L^4}{\beta^2k^2}+\frac{4\s TL^2}{k\beta}$ for convex loss functions under the assumptions of Theorem \ref{thm:stabconvex}.
\subsection{Proof of Theorem \ref{Hardtbound}}
\begin{theorem}
Assume that $\ell(\cdot; z) \in [0, \s]$ is an L-Lipschitz and $\beta$-smooth loss function for
every z. Suppose that we run SGM for T steps with monotonically non-increasing step sizes $\alpha_t \leq c/t$. Then, for $w_T = A(S)$ and $w_T' = A(S')$, where $S,S'$ are neighbouring datasets, we have that:
    \begin{equation}
         \E{\abs{\ell(w_T;z)-\ell(w_T';z)}} \leq \s^{\frac{\beta c}{\beta c+1}}(2cL^2)^{\frac{1}{\beta c+1}}T^{\frac{\beta c}{\beta c
         +1}}\frac{1+\frac{1}{\beta c}}{k-1},
    \end{equation}
    where $k$ is the size of the dataset.
\begin{proof}
    It follows from the proof of Theorem 3.12 in \cite{hardt_train_2016} that
    \begin{equation}\label{HDef}
        \E{\abs{\ell(w_T;z)-\ell(w_T';z)}} \leq \frac{t_0}{k}\s + \frac{2L^2}{\beta(k-1)}\bigg(\frac{T}{t_0}\bigg)^{\beta c} = H_{sup}
    \end{equation}
    To find the minimum value of the right hand side of the equation above, we need to find $t_0$ such that:
    \begin{subequations}
    \begin{align}
        \frac{d}{dt_0}\frac{t_0}{k}\s + \frac{2L^2}{\beta(k-1)}\bigg(\frac{T}{t_0}\bigg)^{\beta c} = 0 \Rightarrow \\
        \frac{\s}{k} - \frac{2cL^2T^{\beta c}t_0^{-\beta_c -1}}{k-1} = 0
        \Rightarrow \\
        t^{\beta c+1} = \frac{2kcL^2T^{\beta c}}{\s(k-1)} \Rightarrow \\
        t_0 = \sqrt[\beta c+1]{\frac{k}{k-1}}\sqrt[\beta c+1]{\frac{2cL^2T^{\beta c}}{G}} \approx \sqrt[\beta c+1]{\frac{2cL^2T^{\beta c}}{G}}
        \end{align}
    \end{subequations}
    substituting this expression back into the right hand side of Equation \ref{HDef} and writing $q = \beta c$ for convenience, we obtain:
    \begin{subequations}
    \begin{gather}
        H_s = \frac{\beta (k-1)t_0^{q+1} \s + 2kL^2T^q}{\beta k (k-1)t_0^q} \Rightarrow \\
        H_s =
        \frac{\beta (k-1) 2cL^2T^{q}+2kL^2T^q}{\beta k (k-1)\bigg(\frac{2cL^2T^{q}}{G}\bigg)^\frac{q}{q+1}} \\=  \frac{2cL^2T^q\bigg(\beta (k-1)+c^{-1}k\bigg)}{\beta k (k-1)\bigg(\frac{2cL^2}{\s}\bigg)^{\frac{q}{q+1}}T^{\frac{q^2}{q+1}}}
        \end{gather}
    \end{subequations}
Simplifying further, we arrive at:
\begin{subequations}
    \begin{align}
        &H_s = \s^{\frac{q}{q+1}}\frac{(2cL^2)^{\frac{1}{q+1}}T^{\frac{q}{q+1}}\bigg(\beta (k-1)+\frac{k}{c}\bigg)}{\beta k (k-1)}\Rightarrow \\ &\leq \s^{\frac{q}{q+1}}(2cL^2)^{\frac{1}{q+1}}T^{\frac{q}{q+1}}\frac{\beta k(1+\frac{1}{\beta c})}{\beta k (k-1)} = \\
        &= \s^{\frac{q}{q+1}}(2cL^2)^{\frac{1}{q+1}}T^{\frac{q}{q+1}}\frac{1+\frac{1}{\beta c}}{k-1}
    \end{align}
    \end{subequations}

\end{proof}

\end{theorem}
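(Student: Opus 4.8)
The plan is to reduce the theorem to a one-parameter optimization. The starting point is the stability decomposition for non-convex SGD from the proof of Theorem 3.12 in \cite{hardt_train_2016}, which introduces a cutoff step $t_0 \in \{1,\dots,T\}$ and bounds
\[
\E{\abs{\ell(w_T;z)-\ell(w_T';z)}} \;\le\; \frac{t_0}{k}\s + \frac{2L^2}{\beta(k-1)}\left(\frac{T}{t_0}\right)^{\beta c}.
\]
The intuition behind the two terms is worth recalling: up to step $t_0$ the two coupled runs on the neighboring sets $S,S'$ may diverge arbitrarily, but the differing sample is selected within the first $t_0$ steps only with probability $t_0/k$, and when it is, the loss gap is at most $\s$ since $\ell\in[0,\s]$; beyond $t_0$, $\beta$-smoothness makes each update $(1+\alpha_t\beta)$-expansive, and with $\alpha_t\le c/t$ the product $\prod_{t=t_0+1}^{T}(1+\beta c/t)\le \exp\!\left(\beta c \ln(T/t_0)\right)=(T/t_0)^{\beta c}$ controls the late-stage growth, scaled by the $L$-Lipschitz sensitivity $2L^2/(\beta(k-1))$.

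The core step is to minimize the right-hand side over $t_0$, treated as a continuous variable. Since the first term grows linearly in $t_0$ and the second decays like $t_0^{-\beta c}$, the objective is convex on $(0,\infty)$ and its unique minimizer is found by setting the derivative to zero: $\frac{\s}{k}=\frac{2cL^2T^{\beta c}}{k-1}\,t_0^{-\beta c-1}$, which gives $t_0=\left(\frac{k}{k-1}\right)^{\frac{1}{\beta c+1}}\left(\frac{2cL^2T^{\beta c}}{\s}\right)^{\frac{1}{\beta c+1}}$. I would absorb the factor $(k/(k-1))^{1/(\beta c+1)}\to 1$ and substitute $t_0\approx\left(2cL^2T^{\beta c}/\s\right)^{1/(\beta c+1)}$ back into the bound.

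The remainder is algebraic bookkeeping. Writing $q=\beta c$, both terms collapse onto the common scale $\s^{q/(q+1)}(2cL^2)^{1/(q+1)}T^{q/(q+1)}$, leaving a $k$-dependent prefactor that simplifies to $\frac{\beta(k-1)+k/c}{\beta k(k-1)}$. The final step is the elementary bound $\beta(k-1)+k/c \le \beta k + k/c = \beta k\left(1+\frac{1}{\beta c}\right)$, which cancels the $\beta k$ in the denominator and produces the claimed factor $\frac{1+\frac{1}{\beta c}}{k-1}$, yielding $\s^{\frac{\beta c}{\beta c+1}}(2cL^2)^{\frac{1}{\beta c+1}}T^{\frac{\beta c}{\beta c+1}}\frac{1+\frac{1}{\beta c}}{k-1}$. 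I expect the only real obstacle to be keeping the fractional exponents of $\s$, $T$ and $2cL^2$ consistent through the substitution, and checking that the prefactor manipulation is a genuine upper bound rather than an approximation; the convexity of the one-dimensional objective guarantees the stationary point is the global optimum, so no second-order check is needed.
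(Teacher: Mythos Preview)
Your proposal is correct and follows essentially the same route as the paper: start from the Hardt--Recht--Singer decomposition $\frac{t_0}{k}\s + \frac{2L^2}{\beta(k-1)}(T/t_0)^{\beta c}$, optimize over $t_0$, drop the $(k/(k-1))^{1/(\beta c+1)}$ factor, and finish with the prefactor bound $\beta(k-1)+k/c\le \beta k(1+1/(\beta c))$. Your added remarks on the intuition behind the two terms and the convexity of the one-dimensional objective are sound but not in the paper; note also that since the starting inequality holds for every $t_0$, plugging in the approximate minimizer (rather than the exact one) still yields a valid upper bound, which resolves the concern you flagged at the end.
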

\subsection{Proof of Theorem \ref{m_k size}}
\begin{theorem}
    In each layer $k$, let $m_k$ be the number of coalitions sampled and let $\svkh$ be the estimate of $\phi_k$. Then, if
    \begin{equation}
        m_k \geq 2\ln{\frac{2n}{b}}\brackets{\frac{2\hat{H}_k^2+2\s \hat{H}_k+4\s a/3}{a^2}},
    \end{equation}
    where
    \begin{equation}
        \hat{H}_k =\s^{\frac{\beta c}{\beta c+1}}(2cL^2)^{\frac{1}{\beta c+1}}T^{\frac{\beta c}{\beta c+1}}\frac{1+\frac{1}{\beta c}}{k-1}
    \end{equation}
    then
    \begin{equation}
        Pr\brackets{\abs{\hat{\sv}_i-\sv_i} \geq a}\leq b
    \end{equation}

    \begin{proof}
 As in the proof of Theorem \ref{size_convex}, we state
    \begin{subequations}
    \begin{gather}
        Pr\bigg(\abs{\svkh-\svk} \geq \alpha\bigg) \\= Pr\bigg(\abs{m_k\svkh-m_k\svk}\bigg) \geq m_k\alpha\bigg) \\ \leq 2\exp{\bigg(-\frac{m_k\alpha^2/2}{\varplace+2G\alpha/3}\bigg)}
        \end{gather}
    \end{subequations}
    for $\varplace = \sum_{i=1}^{m_k}\E{X_i^2}$. It follows from Lemma \ref{lem:LVar} that $\varplace \leq \hat{H}_k^2 + \s \hat{H}_k$ and:
        \begin{subequations}\label{m_kexpbound}
        \begin{gather}
        2\exp{\left(-\frac{m_ka^2}{2(\varplace+2\s a/3)}\right)} \\ \leq 2\exp{\left(-\frac{m_ka^2}{2(\hat{H}_k^2+\s \hat{H}_k+2\s a/3)}\right)}.
        \end{gather}
    \end{subequations}
     We derive $m_k$ as follows:
    \begin{subequations}
    \begin{gather}
    Pr\bigg(\abs{\svkh-\svk} \geq \alpha\bigg) \\ \leq
        2\exp{\bigg(-\frac{m_ka^2/2}{\hat{H}_k^2+\s \hat{H}_k+2G a/3}\bigg)} \leq \frac{b}{n} \Rightarrow \\
        -\frac{m_ka^2/2}{\hat{H}_k^2+\s \hat{H}_k+2G a/3} \leq \ln{\frac{b}{2n}}\Rightarrow \\
        \frac{m_k a^2}{2\hat{H}_k^2+2\s \hat{H}_k+4G a/3} \geq \ln\frac{2n}{b} \Rightarrow \\
        m_k \geq \bigg(\frac{2\hat{H}_k^2+2\s \hat{H}_k+4G a/3}{ a^2}\bigg)\ln\frac{2n}{b}
    \end{gather}
    \end{subequations}
The expression for $\hat{H}_k$ follows from Theorem \ref{Hardtbound}. Note that due to the $k-1$ term in the denominator of $\hat{H}_k$, we assume $\phi_i^0=\phi_i^1 = 0$ in this case.
    \end{proof}
    \end{theorem}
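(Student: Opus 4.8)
The plan is to reuse the argument of Theorem~\ref{size_convex} essentially verbatim, replacing only the stability constant. First I would invoke Theorem~\ref{Hardtbound}: for SGD run on a coalition of size $k$ with monotonically non-increasing step sizes $\alpha_t\le c/t$ on a loss in $[0,\s]$, the algorithm is uniformly stable in expectation with constant $\epsilon \le H_k = \hat{H}_k$. Feeding this $\epsilon$ into Lemma~\ref{lem:LVar}, which bounds the first and second moments of the validation-loss difference $\bigL-\bigL'$ in terms of $\epsilon$ and the loss range $\s$, immediately gives
\[
  \E{\abs{\bigL - \bigL'}} \leq \hat{H}_k
  \qquad\text{and}\qquad
  \E{\brackets{\bigL - \bigL'}^2} \leq \hat{H}_k^2 + \s\hat{H}_k .
\]

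Next I would carry out the concentration step. The layer-$k$ estimator $\svkh$ of Algorithm~\ref{alg:cap} is an average of $m_k$ i.i.d.\ marginal contributions $v_i(S)=\bigL-\bigL'$, each bounded in absolute value because $\ell\in[0,\s]$. Applying Bernstein's inequality in the two-sided form of Lemma~\ref{Bernstein_applied}, with $\varplace = \frac{1}{m_k}\sum_i \E{X_i^2} = \E{\brackets{\bigL-\bigL'}^2} \le \hat{H}_k^2 + \s\hat{H}_k$, yields
\[
  Pr\brackets{\abs{\svkh - \svk} \geq a}
  \leq 2\exp\brackets{-\frac{m_k a^2/2}{\hat{H}_k^2 + \s\hat{H}_k + 2\s a/3}} ,
\]
where substituting the upper bound for $\varplace$ is legitimate because the right-hand side is monotone increasing in $\varplace$. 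Requiring this to be at most $b/n$ and solving for $m_k$ produces the stated threshold. Finally I would lift the per-layer $(a,b/n)$ guarantees to the global bound via Lemma~\ref{lem:unionbound}: since $\varphi_i=\frac{1}{n}\sum_k \svk$, a deviation $\abs{\hat{\varphi}_i-\varphi_i}\ge a$ forces $\abs{\svkh-\svk}\ge a$ in at least one layer, an event of probability at most $n\cdot(b/n)=b$.

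The only genuine wrinkle — and a minor one — is that $\hat{H}_k$ is undefined at $k=1$, since its denominator $k-1$ vanishes, so the stability-derived bound on $\abs{v_i(S)}$ is vacuous for singleton coalitions. I would handle this by restricting the stability and sampling analysis to layers $k=2,\dots,n-1$ and adopting the convention $\phi_i^0=\phi_i^1=0$, so those layers contribute nothing to $\varphi_i$ and need no sampling; the union bound then still ranges over at most $n$ layers and the factor $\ln(2n/b)$ is unaffected. Beyond this bookkeeping, every step is a routine substitution of $\hat{H}_k$ for the convex stability constant in the proof of Theorem~\ref{size_convex}, so I do not expect any substantive obstacle.
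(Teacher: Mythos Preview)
Your proposal is correct and follows essentially the same route as the paper: invoke Theorem~\ref{Hardtbound} for the stability constant $\hat{H}_k$, feed it through Lemma~\ref{lem:LVar} to bound $\E{(\bigL-\bigL')^2}$, apply the two-sided Bernstein bound (Lemma~\ref{Bernstein_applied}) with $M=2\s$, solve for $m_k$, and finish with the union bound of Lemma~\ref{lem:unionbound}. You also correctly anticipate the paper's convention $\phi_i^0=\phi_i^1=0$ to handle the $k-1$ denominator in $\hat{H}_k$.
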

\begin{figure*}[hbt!]
\centering
\begin{tabular}{ccc}
\includegraphics[width=2in]{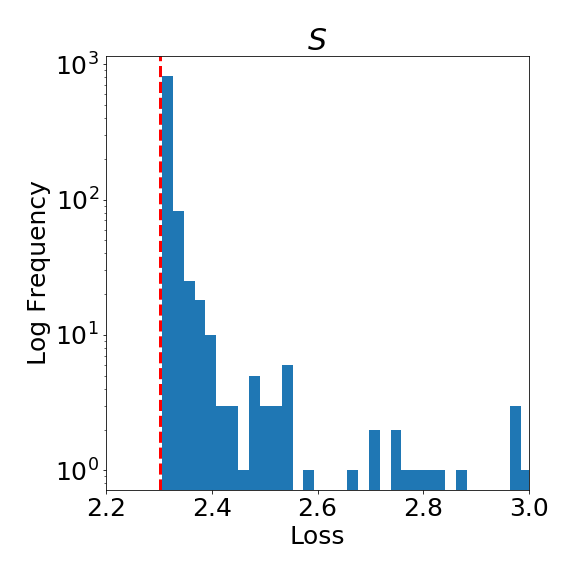} &
\includegraphics[width=2in]{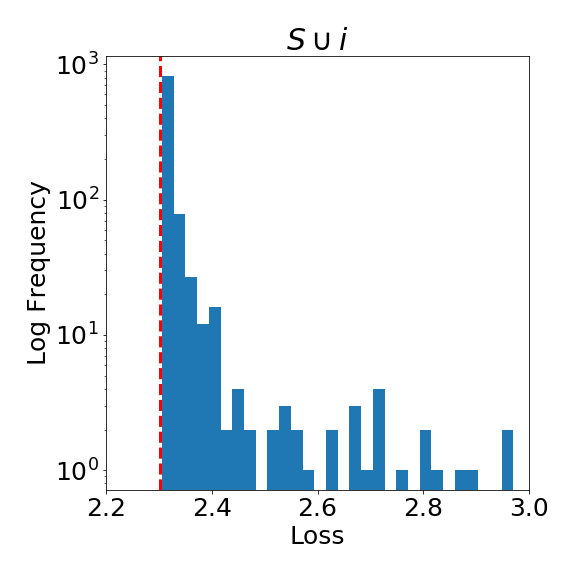}&\includegraphics[width=2in]{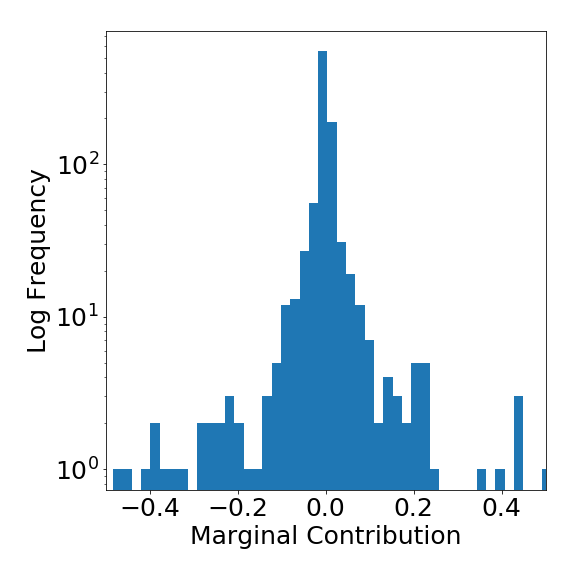}
\\
e) Losses (no $i$)  &f) Losses (with $i$) & (g) Marginal Contributions \\
\end{tabular}
 \caption{Distributions of losses and marginal contributions obtained for 1000 training runs of SGD with the same small training set $S$, where $|S|$=10. The red vertical line in plots a) and b) shows a loss value equivalent to randomly guessing one of the 10 classes, therefore corresponding to a baseline random loss value.}
\label{fig:small-losses}
\end{figure*}

\section{Further Experimental Details}
Strongly convex experiments were completed using a 2.9 GHz Quad-Core Intel Core i7 processor. Non-convex experiments used NVIDIA RT2080Ti 11GB GPUs.
The deterministic variant of CUDNN was used with benchmarking disabled for Pytorch models.

Logistic regression models were trained using Scikit-Learn's~\cite{pedregosa2011scikit}  Liblinear solver. The CNN architecture had 1 convolutional layer and 2 fully connected layers, with RELU activations and max-pooling.  All CNN models were trained via stochastic gradient descent (batch size $1$) with a learning rate of $0.01$ for $20$ or $30$ epochs. The pre-trained CNN was trained for $5$ epochs with learning rate $0.01$ on a disjoint training set of size $1000$ from the same binary-Fashionmnist task.

Due to computational constraints datasizes were reduced from the full available datasize to smaller subsets ($n=100$) that still resulted in significantly better performance than random test accuracy e.g. 0.73 and 0.84 for the Adult and Digits tasks respectively.

\section{Variation in Marginal Contributions for Small Coalitions}

As discussed in the main body of results, CNNs may fail to train meaningfully for very small datasizes such as those used by layer sampling strategies that focus exclusively on very small values of $k$. Figure~\ref{fig:small-losses} shows the distribution of the losses and marginal contributions obtained by performing 1000 training runs of SGD on a CNN using the same hyperparameters and the same small subset of data $S$, where $|S|=10$. As we can see in Figure~\ref{fig:small-losses} a) and b), the loss values of both $S$ and $S \cup i $ fall exclusively above the loss obtained by random guessing, suggesting random variation due to the initialization and data shuffling but no meaningful learning. The marginal contributions are approximately centered around 0 with a mean of $-3\times 10^{-3}$ however have standard deviation of $0.14$. This variation is high when considering that the final Shapley values for the same task are often in the range of $10^{-3}$ to $10^{-4}$.

\end{document}